\newcommand{\opnorm}{\@ifstar\@opnorms\@opnorm}
\newcommand{\@opnorms}[1]{  \left|\mkern-1.5mu\left|\mkern-1.5mu\left|
   #1
  \right|\mkern-1.5mu\right|\mkern-1.5mu\right|
}
\newcommand{\@opnorm}[2][]{  \mathopen{#1|\mkern-1.5mu#1|\mkern-1.5mu#1|}
  #2
  \mathclose{#1|\mkern-1.5mu#1|\mkern-1.5mu#1|}
}
\def\eqref#1{equation~\ref{#1}}
\def\1{\bm{1}}
\def\vh{{\bm{h}}}
\def\vn{{\bm{n}}}
\def\vv{{\bm{v}}}
\def\vx{{\bm{x}}}
\def\vy{{\bm{y}}}
\def\vz{{\bm{z}}}
\def\mS{{\bm{S}}}
\def\mT{{\bm{T}}}
\def\mY{{\bm{Y}}}
\DeclareMathAlphabet{\mathsfit}{\encodingdefault}{\sfdefault}{m}{sl}
\SetMathAlphabet{\mathsfit}{bold}{\encodingdefault}{\sfdefault}{bx}{n}
\def\gX{{\mathcal{X}}}
\def\gY{{\mathcal{Y}}}
\def\gZ{{\mathcal{Z}}}
\newcommand{\E}{\mathbb{E}}
\newcommand{\R}{\mathbb{R}}
\setlist{nosep}
\theoremstyle{plain}
\newtheorem{theorem}{Theorem}
\newtheorem{lemma}[theorem]{Lemma}
\newtheorem{corollary}{Corollary}
\newtheorem{remark}{Remark}
\title{Non-vacuous Generalization Bounds for Deep Neural Networks \\ without  any modification to the trained models}
\author{
 Khoat Than\thanks{{\tt<\url{khoattq@soict.hust.edu.vn}>}; Hanoi University of Science and Technology, Hanoi, Vietnam.} 
 \and Dat Phan\thanks{VinBigdata Institute, Hanoi, Vietnam.}
}
\date{} 
\begin{document}

\maketitle

\begin{abstract}
Understanding and certifying the behavior of modern deep neural networks remains a fundamental challenge in reliable machine learning. We introduce a new class of data-dependent generalization bounds that apply directly to trained models, without any modification. In particular, we present an exactly computable bound that is non-vacuous across all evaluated networks, including ImageNet-scale models with 600M parameters. This this is the first work showing that meaningful generalization guarantees are achievable even for large, unaltered deep networks.
  
  Our approach reveals that generalization is governed by the interaction between the trained model and the geometry of the data distribution. We decompose the generalization error into two interpretable components: a distributional complexity term, capturing how the data mass is distributed across the input space, and local model-behavior terms, capturing the network’s behavior within individual regions. This joint dependence identifies where and why generalization gaps arise. Empirically, some components of our bound are highly predictive of the true test error, and the bound tightens when the partition aligns with the intrinsic data geometry, highlighting data-dependent local regularity as a key driver of  generalization. 
\end{abstract}

\section{Introduction}

Deep neural networks (NNs) have enabled remarkable advances across domains ranging from game playing to structural biology and large-scale language modeling \citep{silver2016mastering,jumper2021AlphaFold,achiam2023GPT4}. A striking aspect of these systems is their ability to generalize: models trained on finite datasets often achieve strong performance on previously unseen data. Understanding the origin of this generalization capability remains a central challenge in modern learning theory.

Despite decades of progress, classical frameworks struggle to explain this phenomenon in the regime of overparameterized, high-capacity networks. Approaches based on Rademacher complexity \citep{bartlett2017SpectralMarginDNN}, algorithmic stability \citep{bousquet2002stability,brutzkus2021optimization}, robustness \citep{xu2012robustnessGeneralize,sokolic2017robustDNN}, and PAC-Bayes analysis \citep{mcallester2003pac,biggs2022nonvacuousBound} offer valuable perspectives, but typically yield vacuous guarantees when applied to modern NN  architectures.

Recent work has begun to close this gap by demonstrating that non-vacuous bounds are attainable under carefully controlled settings. For instance, \citet{dziugaite2017computingBounds} obtained non-vacuous PAC-Bayes bounds via optimized posterior distributions, while \citet{zhou2019CompressionBound} leveraged compression to control the generalization error of stochastic networks. Stability-based approaches have also shown promise \citep{nadjahi24slicingMI}. However, these advances largely remain confined to small-scale models or rely on stochasticity and model modifications.

A notable breakthrough toward large-scale analysis was achieved by \citet{lotfi24NonVacuousLLM,lotfi24unlockingLLM}, who derived non-vacuous bounds for large language models such as GPT-2 and LLaMA2 using a combination of quantization, fine-tuning, and compression techniques. While these results significantly expand the scope of theoretical guarantees, they apply to modified versions of the original networks rather than to the trained models themselves. Consequently, it remains unclear whether such guarantees faithfully capture the generalization behavior of the unaltered model. This limitation highlights a fundamental open problem:

\textbf{Can we obtain a non-vacuous bound on the true error of a  trained NN, without altering it?}

Addressing this question is intrinsically challenging. The true error is the primary quantity governing generalization \citep{mohri2018foundations}, yet most existing techniques \citep{dziugaite2017computingBounds,zhou2019CompressionBound} either introduce stochasticity or rely on significant model transformations. Even recent PAC-Bayes approaches \citep{lotfi24NonVacuousLLM,lotfi24unlockingLLM} that reduce explicit randomness still depend on aggressive compression and fine-tuning, \textit{yielding guarantees for surrogate models}. Since the relationship between the original model $\vh$ and its modified counterpart $\vh'$ is generally uncontrolled, tight bounds for $\vh'$ need not translate into meaningful guarantees for $\vh$.

\begin{wrapfigure}{ri}{0.45\linewidth}
  \centering
  \includegraphics[width=\linewidth]{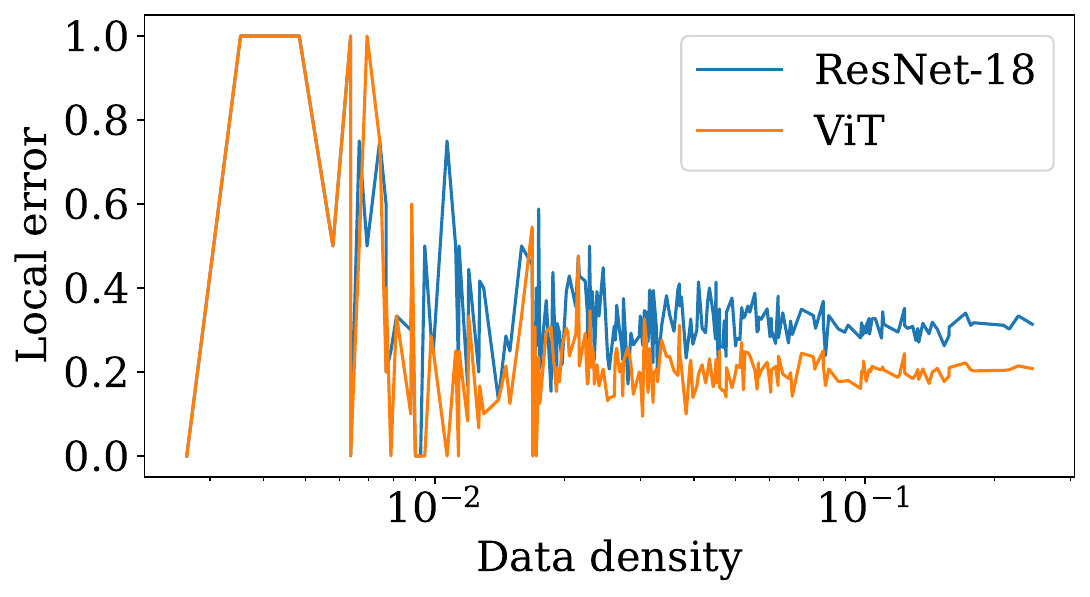}
      \caption{The behaviors of two ImageNet-scale trained models. The $x$-axis shows the data density  at each local region of the input space, while the $y$-axis reports  local errors.}
      \label{fig-density-error}
\end{wrapfigure}

At a deeper level, these difficulties stem from the reliance of existing theories on \emph{global} measures of complexity, whether of the hypothesis class, parameter space, or model itself. In contrast, modern NNs exhibit highly heterogeneous behavior across the input space: prediction errors tend to concentrate in specific regions as illustrated in Figure~\ref{fig-density-error}, and both the data distribution and the model’s local behavior critically influence generalization. Capturing this interaction requires moving beyond global worst-case analyses toward a more refined, spatially localized perspective.

In this work, we develop a \emph{geometry-aware} framework that decomposes the true error of a trained model into contributions from different regions of the input space. This perspective makes explicit how distributional mass and local model behavior jointly determine generalization. Crucially, our approach yields bounds that are both \emph{non-vacuous} and \emph{structurally informative}: they not only certify the true error, but also identify where errors arise and how they relate to the geometry of the data.

Our main contributions are as follows:
\begin{itemize}
\item \textit{First}, we derive a novel bound on the true error of a trained model $\vh$ under mild assumptions, based on a partition of the input space. The bound simultaneously captures distributional complexity and local properties of $\vh$, revealing how alignment between local errors and data geometry governs overall performance. Empirically, such an alignment highly correlates with true error, and hence can be reliably used for model selection and understanding generalization.

A key technical challenge is the control of an intractable aggregation term over local regions; we resolve this via a fine-grained analysis of small and binomial random variables.

\item \textit{Second}, we introduce a tractable variant that can be computed exactly from the training data (optionally, with few more held-out samples), without modifying $\vh$. This yields a practical certification procedure that retains the structural insights of the general theory. A detailed comparison with prior approaches is provided in Table~\ref{tab-Recent-approaches}.

\item \textit{Third}, we empirically validate our framework on 30 modern neural networks trained on ImageNet (1.2M samples), spanning CNNs and Transformers with up to 600M parameters. Across all models, we obtain non-vacuous upper bounds on the true error, even under conservative parameter choices. With suitable partitions, the resulting certificates are remarkably tight. For example, for a large Vision Transformer with validation error $0.2$, our bound yields a certificate of $0.3$.
\end{itemize}

To the best of our knowledge, this is the first work to provide non-vacuous guarantees at this large scale without altering the trained models. The tightness of our certificates suggests that meaningful, data-dependent generalization guarantees for modern NNs are practically realizable without requiring large held-out test sets.

\textit{Organization:} We present our novel bounds in Section~\ref{sec-Error-bounds}, accompanied with more detailed comparisons. Section~\ref{sec-evaluation} contains our empirical evaluation for some pretrained NNs. Section~\ref{sec-Conclusion} concludes the paper. A comprehensive survey about related work is presented in Appendix~\ref{sec-related-work}.% Proofs and more experimental details can be found in appendices.

\begin{table*}[tp]
\caption{Recent approaches for analyzing generalization error. $\checkmark$ means ``Required'' or ``Yes''. The upper part shows the required assumptions about differrent aspects, e.g., hypothesis space, loss function, training or finetuning. The lower part reports non-vacuousness in different situations. }
\label{tab-Recent-approaches}
\addtolength{\tabcolsep}{-0.5em}
\begin{tabular}{lccccccc}
\hline 
\textbf{Approach} & \footnotesize{\textbf{Weight norm}} & \footnotesize{\textbf{Alg. Stability}} & \footnotesize{\textbf{Alg. Robustness}} & \footnotesize{\textbf{Mutual Info}}&  \multicolumn{2}{c}{\footnotesize{\textbf{PAC-Bayes}}} & \footnotesize{\textbf{Ours}} \\
% & \tiny{\cite{bartlett2017SpectralMarginDNN}} & \tiny{\cite{bousquet2002stability}} & \tiny{\cite{xu2012robustnessGeneralize}} & \tiny{\cite{xu2017information}} & \tiny{\cite{zhou2019CompressionBound}} & \tiny{\cite{lotfi24NonVacuousLLM}} & \\
 & \tiny{\cite{golowich2020RC}} &  \tiny{\cite{li24algorithmic}} & \tiny{\cite{kawaguchi22RobustGen}}   & \tiny{\cite{nadjahi24slicingMI}}  & \tiny{\cite{mustafa2024StochasticNN}}  & \tiny{\cite{lotfi24unlockingLLM}} & \\
\hline
\small{\textbf{Requirement:}} & & & & & & & \\
\hspace{5pt}\small{Model compressibility}  &  & & & $\checkmark$ &  $\checkmark$ & $\checkmark$ &  \\
\hspace{5pt}\small{Train or finetune} & & & & $\checkmark$ &  $\checkmark$ &  $\checkmark$ &  \\
\hspace{5pt}\small{Lipschitz loss}  & $\checkmark$ & $\checkmark$ &  & $\checkmark$ & & & \\
\hspace{5pt}\small{\textit{Finite} hypothesis space}  & & & & & &  $\checkmark$ &  \\
%\hline
%Vacuousness  & $\checkmark$ & $\checkmark$ & $\checkmark$ & \textcolor{blue}{\scriptsize{\textsf{x}}} & \textcolor{blue}{\scriptsize{\textsf{x}}} & \textcolor{blue}{\scriptsize{\textsf{x}}} & \textcolor{blue}{\scriptsize{\textsf{x}}} \\
%\hline
\textcolor{blue}{\small{\textbf{Non-vacuousness} for:}} & &  &  & & & &  \\
\hspace{5pt}\small{\textit{Stochastic} models} only &  &  $\checkmark$ &  & $\checkmark$ &  $\checkmark$ & & \\ 
\hspace{5pt}\small{Trained models} & & & & & & $\checkmark$ & $\checkmark$\\
\hspace{5pt}\small{Training size $>$ 1 M}  & & & & &  & $\checkmark$ & $\checkmark$ \\
\hspace{5pt}\small{Model size $>$ 600 M}  & & & & &  & $\checkmark$ & $\checkmark$\\
 \hline
\end{tabular}
\vspace{-10pt}
\end{table*}

\section{Error bounds} \label{sec-Error-bounds}

In this section, we present  novel bounds for the error of a given model. The first bound provides a general form which depends on the complexity of the data distribution and the trained model. This bound cannot be exactly computed, but serves as the theoretical foundation. The last bound provides an explicit error estimate, which can be computed directly from any given dataset.

\textit{Notations:} $\mS$ often denotes a dataset and $| \mS |$ denotes its size/cardinality. $\Gamma$ denotes a partition of the data space. $[K]$ denotes the set $\{1, ..., K\}$ of natural numbers at most $K$. %$\ell$ denotes a loss function, and $\vh$ often denotes a model or hypothesis of interest. 

Consider a hypothesis  (or model) $\vh: \gX \rightarrow \gY$ which maps from an input space $\gX$ to an output space $\gY$, and a  loss function $\ell: (\vh, (\vx,\vy)) \mapsto l \in \R$, where $(\vx,\vy) \in \gX \times \gY$. Each $\ell(\vh,\vz)$ tells the loss (or quality) of $\vh$ at an instance $\vz \in \gZ: = \gX \times \gY$. Given a distribution $P$ defined on $\gZ$, the quality of $\vh$ is measured by its \textit{expected loss} $F(P, \vh) = \E_{\vz \sim P}[\ell(\vh,\vz)]$.  Quantity $F(P, \vh)$ tells the generalization ability of model $\vh$; a smaller $F(P, \vh)$ implies  better generalization on unseen data. 

For analyzing generalization ability, we are often interested in estimating (or bounding) $F(P, \vh)$. Sometimes this expected loss is compared with the \emph{empirical loss}  of $\vh$ on a data set $\mS = \{\vz_1, ..., \vz_n\} \subseteq \gZ$, which is defined as $F(\mS, \vh) =  \frac{1}{n} \sum_{\vz \in \mS} \ell(\vh,\vz)$. Note that a small $F(\mS, \vh)$ does not neccessarily imply good generalization of $\vh$, since overfitting may appear. Therefore, our ultimate goal is to estimate $F(P, \vh)$ directly.

Let $\Gamma(\gZ) := \bigcup_{i=1}^{K } \gZ_i$ be a partition of $\gZ$ into $K $ disjoint nonempty subsets. Let $\mS_i = \mS \cap \gZ_i $, $n_i = | \mS_i |$ be the number of samples  falling into $\gZ_i$, and $n = \sum_{j=1}^K n_j$. Let $\mT = \{ i \in [K ] : n_i >0 \}$ contain the indices of areas in which some samples of $\mS$ appear, $a_i(\vh) = \E_{\vz}[\ell(\vh,\vz) | \vz \in \gZ_i]$ as the expected (local) loss of $\vh$ in area $\gZ_i$ for each $i \in [K]$, and $a_o = \max_{j \notin \mT} a_j(\vh)$.

\subsection{General bound}

The first result incorporates some properties of the data distribution and the trained model. 

\begin{theorem}\label{thm-gen-train-small-K}
Given  a partition $\Gamma$ and a bounded nonnegative loss $\ell$, consider a model $\vh$ which may depend on a dataset $\mS$ with $n$ i.i.d. samples from distribution $P$. Denote $p_i = \Pr_{\vz \sim P}(\vz \in \gZ_i)$ as the probability measure of area $\gZ_i$ for $i \in [K]$,  and $u = \sum_{i=1}^K \gamma n p_i(1+ \gamma n p_i)$. For  any constants $\gamma \ge 1$, $\delta_1 \ge \exp(- \frac{u \ln\gamma}{4n-3})$ and $\delta_2 >0$, we have the following with probability at least $1 -\delta_1 -\delta_2$:
\begin{equation}
\label{thm-gen-train-small-K-eq}
F(P, \vh) \le F(\mS,\vh)  + C\sqrt{\frac{u}{2n^2} \ln\frac{1}{\delta_1} } + g(\Gamma,\vh,\delta_2)
\end{equation}
where $g(\Gamma,\vh,\delta_2) = \frac{\sqrt{\ln(2K/\delta_2)}}{n} \sum\limits_{i \in \mT} \sqrt{n_i} \left(a_o + \sqrt{2} a_i(\vh)\right)   + \frac{2\ln(2K/\delta_2)}{n} (a_o | \mT| +  \sum\limits_{i \in \mT} a_i(\vh) )$ and $C = \sup_{\vz \in \gZ} \ell(\vh,\vz)$.
\end{theorem}

Theorem~\ref{thm-gen-train-small-K} establishes that the expected loss $F(P,\vh)$ of a trained model cannot deviate significantly from its empirical loss $F(\mS,\vh)$, with high probability. The deviation is controlled by two additive terms: a distribution-dependent uncertainty term and a partition-dependent alignment term. Together, these terms quantify how sampling noise, data geometry, and local model behavior jointly shape generalization. Crucially, the bound makes explicit that generalization is not governed solely by sample size or global capacity measures, but by how the model’s local errors interact with the geometric structure of the data distribution.

We highlight several key implications of this result.

\begin{itemize}
\item \textit{Geometry-aware distributional complexity:} The quantity $u= \gamma n + (\gamma n)^2 \sum_{i=1}^K p_i^2$ captures a notion of \emph{distributional complexity} induced by the partition $\Gamma$. Unlike classical complexity measures that depend on hypothesis classes, $u$ reflects how probability mass is distributed across regions of the input space. From a geometric perspective, distributions (e.g., low-variance Gaussians or data supported near low-dimensional manifolds)  that concentrate mass in a small number of regions lead to larger values of $\sum_i p_i^2$ and hence larger $u$. In contrast, geometrically diffuse distributions distribute mass more evenly across partitions, resulting in smaller $u$ and tighter bounds. The term $|\mT|$ further provides a coarse geometric summary of how many regions are meaningfully populated by the data. Together, these terms make the bound explicitly sensitive to the \emph{shape}, \emph{concentration}, and \emph{support geometry} of $P$, a dependence that is largely absent from prior generalization theories.

\item \textit{Alignment between data geometry and local loss:} A crucial feature of the bound is the appearance of the alignment term, $\mathrm{Align} := \sum_{i\in \mT} a_i(\vh)\sqrt{n_i/n}$, which can be interpreted as a dot product between local errors and the square root of local data mass. Geometrically, this term penalizes regions where the model incurs large loss precisely in areas of high probability density. A well-chosen partition induces a near-orthogonality between these two vectors: high-density regions exhibit small local errors, while regions with larger errors are confined to areas of low mass. Conversely, poor alignment corresponds to a geometric mismatch between the model’s loss landscape and the data distribution, resulting in large contributions to the bound, as illustrated in Figure~\ref{fig-geometry-align-example}. This mechanism highlights that generalization hinges on \emph{where} errors occur in the input space, not merely on their global average—a geometric insight not captured by classical bounds. 

\item \textit{Partition as a geometric lens:} The partition $\Gamma$ plays the role of a geometric lens through which both the data distribution and the model’s loss landscape are discretized and compared. Finer partitions can resolve more detailed local structure, but may increase $|\mT|$ and exacerbate  $g(\Gamma,\vh,\delta_2)$. Coarser partitions reduce variance but may obscure important geometric heterogeneity. The theorem thus formalizes a trade-off: \textit{effective generalization guarantees arise when the partition aligns with intrinsic data geometry, capturing local regularity of both $P$ and $\vh$.} In this sense, the bound implicitly favors partitions that respect the underlying manifold or clustering structure of the data.

\item \textit{Model-dependent guarantee with mild assumptions:} Unlike stability  \citep{li24algorithmic,dong2025exactly}, robustness \citep{xu2012robustnessGeneralize,kawaguchi22RobustGen}, or Radermacher complexity  \citep{bartlett2017SpectralMarginDNN,galanti2023normDNN} approaches, the bound makes no assumption on the hypothesis class or the learning algorithm. It only requires i.i.d. and bounded loss, which are commonly used in theoretical work.
\end{itemize}

Overall, Theorem~\ref{thm-gen-train-small-K} reframes generalization as a geometric phenomenon: \textbf{test error is controlled by how well the model’s local loss structure aligns with the data distribution across the input space.} By making this interaction explicit, the bound provides both a guarantee and a diagnostic tool for understanding when and why a trained model generalizes.

\begin{figure*}[h!]
    \centering
    \includegraphics[width=0.85\linewidth]{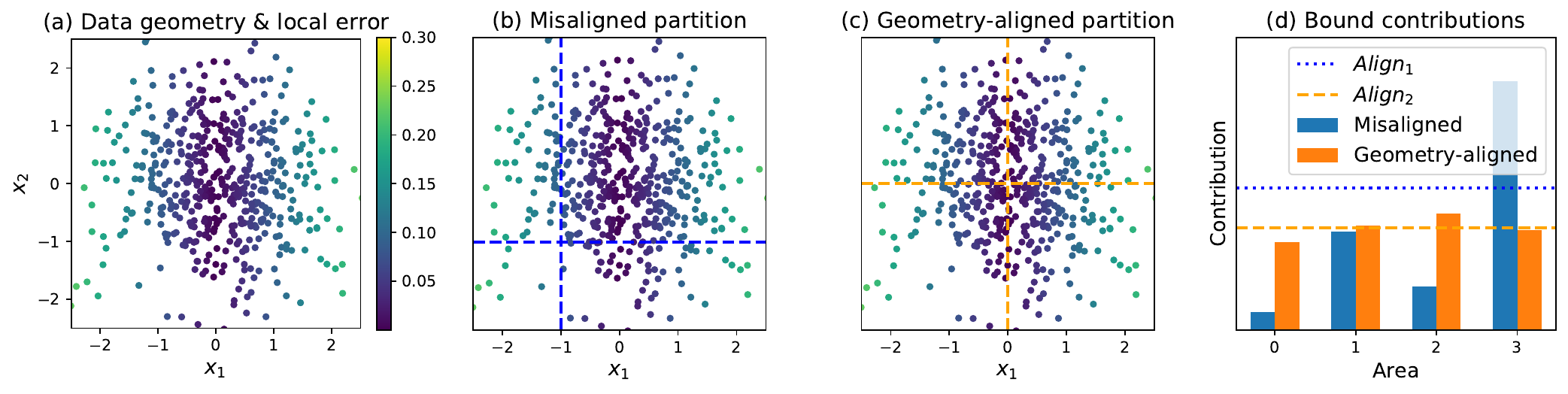}
    \caption{Data geometry, partition, and alignment. (a) 500 samples generated from a synthetic distribution and their prediction errors shown by colors. (b) A bad partition that devides the data space into uneven regions. (c) A geometry-aligned partition that concentrates error. (d) Contributions from local regions by different partitions, and alignments between data geometry and local errors. $Align_1$ is the result of using a misaligned parition, while $Align_2$ comes from a well-aligned one.}
    \label{fig-geometry-align-example}
% \vspace{-15pt}
\end{figure*}

\begin{remark}
It is worth noticing the similarity between our bound (\ref{thm-gen-train-small-K-eq}) and robustness-based bounds in \citep{kawaguchi22RobustGen,than2025gentle}. $F(\mS,\vh) + g(\Gamma,\vh,\delta_2)$ is the common part in those bounds. Our bound~(\ref{thm-gen-train-small-K-eq}) contains $u$ that encodes the complexity of the data distribution, whereas the bounds in \citep{kawaguchi22RobustGen,than2025gentle} use a robustness quantity that measures the sensitivity of the loss w.r.t. a change in the input. While prior bounds are not amenable to be exactly computed from a training set, our bound enables to easily derive a computable, non-vacuous bound (below). This is the main advantage of bound (\ref{thm-gen-train-small-K-eq}).
\end{remark}

One limitation of bound (\ref{thm-gen-train-small-K-eq}) is that it is not diminishing as $n$ increases while fixing the partition size $K$. It can be seen from the second term, i.e., $ C\sqrt{0.5(\frac{\gamma}{n} + \gamma^2\sum_{i=1}^K  p_i^2)  \ln\frac{1}{\delta_1} }$. Luckily, this issue can be easily fixed by allowing $K$ to increase with $n$, owing to the following result.

\begin{corollary}\label{cor-gen-train-small-K-continuous}
Given the notations in Theorem \ref{thm-gen-train-small-K}, consider a continuous distribution $P$ supported on a convex domain $\gZ$. 
For any $K>0, \delta_2 >0$, $\delta_1 \ge \exp\left(-{(\gamma n + \gamma^2 n^2/K) (\ln\gamma)}/{(4n-3)}\right)$,  with probability at least $1 -\delta_1 -\delta_2$, we have: 
$F(P, \vh) \le F(\mS,\vh)  + C\sqrt{0.5\left(\frac{\gamma}{n} + \frac{\gamma^2}{K}\right) \ln\frac{1}{\delta_1} } +   g(\Gamma,\vh,\delta_2) $. 
%2. $\Pr\left(F(P, \vh) \le F(\mS,\vh)  + C\sqrt{0.5\left(\frac{\gamma}{n} + \frac{1}{\sqrt{n}}\right) \ln\frac{1}{\delta_0} } +   g(\Gamma,\vh,\delta_2) \right) \ge 1 -\delta_0 -\delta_2$.
\end{corollary}

\begin{remark}[Convergence rate]
This result suggests that by choosing $K = O(n^\beta)$ for $\beta \in [0,1]$, the test error is bounded by $F(\mS,\vh) + O(n^{-\beta/2})$. Note that there is a tradeoff between $K$ and $g$. A large $K$ can potentially produce a large $g$, as evidenced in our ablation later on real data. Therefore a balanced choice for $K$ seems to be $O(n^{1/2})$, making our bound  scale as $O(n^{-1/4})$. Note that this convergence rate seems to be sub-optimal, and hence leaves open room for future improvement.
\end{remark}

\begin{proof}[Proof sketch for Theorem \ref{thm-gen-train-small-K} and \textbf{technical novelty}]
The detailed proof appears in Appendix~\ref{app-Proofs-main-results}. We focus on bounding the probability $\Pr\left(F(P,\vh) - F(\mS,\vh) \ge \phi\right)$, for some gap $\phi$. Note that $F(P,\vh)- F(\mS,\vh) = A +B$, where $A = F(P,\vh) - \sum_i \frac{n_i}{n} a_i(\vh)$ and $B=\sum_i \frac{n_i}{n} a_i(\vh) - F(\mS,\vh)$. Therefore, our proof estimates $\Pr(A \ge g) $ and $\Pr(B \ge t)$ for some constant $t$. Once they are known, we can use the union bound to obtain a bound on $\Pr\left(F(P,\vh) - F(\mS,\vh) \ge g+ t \right)$ as desired. We use a result from \citep{kawaguchi22RobustGen} to bound $\Pr(A \ge g)$. The remaining task is to estimate $\Pr(B \ge t)$, which is \textbf{the main challenge}. This challenge requires approximating an intractable quantity from a data set.

We resolve this challenge by developing Theorem~\ref{thm-gen-general-data-part}. Its proof contains three main steps: 

1. First we show $\Pr(B(\vh) \ge t ) \le  e^{-y t}  \E_{\vv,\vn}\left[ \E_{\mS}\left[e^{y B(\vv)} | \vv, \vn\right] \right] $, for  $\vn = \{n_1, ..., n_K\}$ and  some  $y$. This is the \textit{key novelty}, where we transform error estimation for $\vh$ into estimation of $\E_{\mS}\left[e^{y B(\vv)} | \vv, \vn\right]$ for a fixed model $\vv$ and then integrating $\vv$ out via $\E_{\vv,\vn}$.

2. We next estimate $\E_{\mS}\left[e^{y B(\vv)} | \vv, \vn\right]$.  Overall, we make sure that $\E_{\mS}\left[e^{y B(\vv)} | \vv, \vn\right] \le e^{\psi(y,\vn)}$, for some function $\psi(y,\vn)$ which does not depend on $\vv$. As a result $\Pr(B(\vh) \ge t ) \le e^{-y t} \mathbb{E}_{\vn}e^{\psi(y,\vn)}$.

3. The last step is to bound $\mathbb{E}_{\vn}e^{\psi(y,\vn)}$. This requires us to develop \textit{novel} analyses for small random variables in Appendix~\ref{app-Supporting-theorems}. A suitable choice for $t, y$ completes our proof. 
\end{proof}

\subsection{Computable bound}

It is worth noticing that bound (\ref{thm-gen-train-small-K-eq}) contains some unknown quantities, e.g., $u$ and $a_i$'s, which cannot be computed exactly. This is the main limitation.  The following bound overcomes such a limitation.

\begin{theorem}\label{thm-gen-train-small-K-any-distribution}
Given the notations and assumption in Theorem~\ref{thm-gen-train-small-K}, for  any constants $\gamma \ge 1, \delta >0$ and  $\alpha \in [0, \frac{\gamma n(K+\gamma n)}{K(4n-3)}]$,  we have the following with probability at least $1- \gamma^{-\alpha}  -\delta$:
\begin{equation}
\label{thm-gen-train-small-K-any-distribution-eq}
F(P, \vh) \le F(\mS,\vh)  + C\sqrt{\hat{u}\alpha \ln\gamma } + g_2(\delta/2)
\end{equation}
where {\small $\hat{u} = \frac{\gamma}{2n} +  \frac{\gamma^2 }{2}\sum_{i=1}^K \left( \frac{n_i}{n} \right)^2 + \gamma^2 \sqrt{\frac{2}{n}\ln\frac{2K}{\delta}}$, $g_2(\delta) =  \frac{C(1 + \sqrt{2})\sqrt{\ln(2K/\delta)}}{n} \sum_{i \in \mT} \sqrt{n_i}  + \frac{4C | \mT| \ln(2K/\delta)}{n}$}.
\end{theorem}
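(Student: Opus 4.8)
Write $\ell(\vh,\cdot)\in[0,C]$ (the bound is vacuous otherwise), put $p_i:=P(\gZ_i)$, let $i(\vz)$ be the index of the cell of $\Gamma$ containing $\vz$, and set $F_i:=\E_{\vz\sim P}[\ell(\vh,\vz)\mid\vz\in\gZ_i]$ when $p_i>0$ (and $F_i:=0$ otherwise, an a.s.\ irrelevant choice) and $\hat F_i:=\frac1{n_i}\sum_{\vz\in\mS_i}\ell(\vh,\vz)$ for $i\in\mT$. The plan is to compare $F(P,\vh)$ and $F(\mS,\vh)$ through the piecewise-constant surrogate $\vz\mapsto F_{i(\vz)}$: since $F(P,\vh)=\sum_i p_iF_i$, $\frac1n\sum_{\vz\in\mS}F_{i(\vz)}=\sum_i\frac{n_i}{n}F_i$ and $F(\mS,\vh)=\sum_{i\in\mT}\frac{n_i}{n}\hat F_i$, one has the exact identity
\begin{equation*}
F(P,\vh)-F(\mS,\vh)=\underbrace{\sum_{i=1}^{K}\Bigl(p_i-\tfrac{n_i}{n}\Bigr)F_i}_{A}\;+\;\underbrace{\sum_{i\in\mT}\tfrac{n_i}{n}\bigl(F_i-\hat F_i\bigr)}_{B}.
\end{equation*}
I would bound $A$ (fluctuation of the cell frequencies) by $g(\delta/2)$ and $B$ (within-cell sampling error) by $C\sqrt{\hat u\,\alpha\ln\gamma}$, each on its own high-probability event, and finish with a union bound.

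\emph{Bounding $A$.} Because $\sum_i(p_i-\frac{n_i}{n})=0$, the positive and negative parts of the summands balance, so $A\le C\sum_i(p_i-\frac{n_i}{n})^{+}=C\sum_i(\frac{n_i}{n}-p_i)^{+}$; and $(\frac{n_i}{n}-p_i)^{+}>0$ forces $n_i\ge1$, i.e.\ $i\in\mT$ — so the missing mass of unvisited cells drops out for free and $A\le C\sum_{i\in\mT}(\frac{n_i}{n}-p_i)^{+}$. For each fixed $i$ one has $n_i\sim\mathrm{Bin}(n,p_i)$, so an \emph{empirical} Bernstein bound gives $\frac{n_i}{n}-p_i\le\sqrt{2(n_i/n)\,\epsilon/n}+c\,\epsilon/n$ with probability $1-e^{-\epsilon}$. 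I would take $\epsilon$ of order $\ln(4K/\delta)$, union-bound over $i\in[K]$, and apply Cauchy--Schwarz through $\sum_{i\in\mT}\sqrt{n_i/n}\le\sqrt{|\mT|\sum_{i\in\mT}n_i/n}=\sqrt{|\mT|}$; collecting terms yields $A\le g(\delta/2)$ with probability at least $1-\delta/2$. (Using the observable $n_i/n$ rather than the unknown $p_i$ as the Bernstein variance proxy is what keeps $g$ free of distribution-dependent quantities.)

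\emph{Bounding $B$.} Condition on $\gG$, the $\sigma$-algebra generated by the cell memberships of the samples. Given $\gG$, samples lying in distinct cells are independent and each $\vz\in\mS_i$ is distributed as $P(\cdot\mid\gZ_i)$; hence the variables $Z_i:=\frac{n_i}{n}(F_i-\hat F_i)$, $i\in\mT$, are conditionally independent, conditionally centred, and bounded by $|Z_i|\le C\,n_i/n$. I would control $B=\sum_{i\in\mT}Z_i$ with the concentration inequality for conditionally independent variables of Section~\ref{sec-Concentration-inequality}, applied conditionally (where the $Z_i$ are genuinely independent): it yields $B\le C\sqrt{\hat u\,\alpha\ln\gamma}$ with probability at least $1-\gamma^{-\alpha}-\delta/2$, the range $\alpha\le\frac{\gamma n(K+\gamma n)}{K(4n-3)}$ being the inequality's domain of validity for its Chernoff parameter. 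Here $\hat u$ is, up to the $\gamma$-powers carried by the inequality, the conditional variance proxy $\propto\sum_i(n_i/n)^2$, padded by an additive $O(1/n)$ and by a McDiarmid cushion of order $\sqrt{\tfrac2n\ln\tfrac{2K}{\delta}}$ (the bounded differences of $\sum_i(n_i/n)^2$, and of the collision-probability functional $\sum_i p_i^2$ it approximates, are $O(1/n)$) so that $\hat u$ is an almost-sure upper bound on the true proxy for every configuration and is computable from $\mS$ alone.

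Combining the two events by a union bound, $F(P,\vh)-F(\mS,\vh)=A+B\le g(\delta/2)+C\sqrt{\hat u\,\alpha\ln\gamma}$ holds with probability at least $1-\gamma^{-\alpha}-\delta$, which is the claim. The main obstacle is the $B$ term: establishing the Hoeffding/Bernstein-type inequality for conditionally independent variables in the right $\gamma$-parametrised form, identifying exactly the admissible range of its Chernoff parameter (the source of the constraint on $\alpha$), and transferring the \emph{random} variance proxy $\sum_i(n_i/n)^2$ to a bound valid from the sample alone; by contrast, the $A$ term is routine once the ``$\sum_i(p_i-\frac{n_i}{n})=0$'' reduction to the visited cells $\mT$ and the choice of an empirical (rather than population) Bernstein bound are in place.
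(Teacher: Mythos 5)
Your proposal follows essentially the same route as the paper: your $A$ and $B$ are exactly the two terms $\sum_i\bigl(p_i-\tfrac{n_i}{n}\bigr)a_i(\vh)$ and $\sum_i\tfrac{n_i}{n}\bigl(a_i(\vh)-\hat F_i\bigr)$ handled in the proof of Theorem~\ref{thm-gen-train-small-K}, and your treatment of $B$ --- condition on the cell counts, apply the conditionally-independent Hoeffding inequality of Theorem~\ref{thm-Hoeffding-inequality-multinomial} with binomial widths, then trade the unknown $\sum_i p_i^2$ inside the proxy for $\sum_i(n_i/n)^2$ plus a cushion --- is precisely the paper's argument with Theorem~\ref{thm-gen-train-small-K} inlined. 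You differ only in two peripheral ingredients, and both substitutions are sound: for $A$ the paper simply invokes Theorem~3 of \citep{kawaguchi22RobustGen}, whereas your positive-part reduction (using $\sum_i(p_i-\tfrac{n_i}{n})=0$ so that only cells in $\mT$ contribute) combined with a per-cell empirical Bernstein bound and Cauchy--Schwarz is self-contained and in fact fits inside $g(\delta/2)$ with slightly smaller constants; for the transfer $\sum_i p_i^2\le\sum_i(n_i/n)^2+2\sqrt{\tfrac{2}{n}\ln\tfrac{2K}{\delta}}$ the paper uses its Lemma~\ref{lem-multinomial-square} (a convex-combination trick plus per-cell binomial tails) where you propose bounded differences, which gives the same order.

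Two points need tightening. First, the constraint on $\alpha$ is not just ``the domain of validity of the Chernoff parameter'': Theorem~\ref{thm-Hoeffding-inequality-multinomial} requires $\delta_1\ge\exp\bigl(-\tfrac{u\ln\gamma}{4n-3}\bigr)$ with $u$ depending on the unknown $p_i$'s, and the missing step (which you flag as an obstacle but do not carry out) is the worst-case bound $\sum_i p_i^2\ge 1/K$, hence $u\ge\gamma n+\gamma^2 n^2/K$, which is exactly what makes $\delta_1=\gamma^{-\alpha}$ admissible for every $\alpha\le\tfrac{\gamma n(K+\gamma n)}{K(4n-3)}$. Second, $\hat u$ is \emph{not} an almost-sure upper bound on the true proxy; it dominates it only on an event of probability at least $1-\delta/2$. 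Your probability budget for the $B$ event ($1-\gamma^{-\alpha}-\delta/2$) already pays for this, so only the wording is wrong, but stated as ``almost sure'' the claim is false.
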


This result follows directly from Theorem \ref{thm-gen-train-small-K}, where $g_2(\delta)$ serves as a simplified (and generally looser) surrogate for the earlier term $g(\Gamma,\vh,\delta_2)$. Bound~\ref{thm-gen-train-small-K-eq} is not exactly computable, due to $u$ and $g$. In contrast, $\hat{u}$ and $g_2$ are computable approximations of $u$ and $g$, respectively. It is important to note that while $\hat{u}$ preserves the essential role of $u$ in capturing distributional complexity, $g_2$ no longer retains the fine-grained local structure encoded in $g$. Therefore, Bound (\ref{thm-gen-train-small-K-any-distribution-eq}) should serve as providing certificate for a model, rather than understanding generalization.

A key advantage of this result is that \textit{the bound can be evaluated using only the training set}. Of couse, some held-out samples can contribute more. Once a choice of $K$ and a partition $\Gamma$ is fixed, we can compute the counts $n_i$, identify the set $\mT$, and directly evaluate Bound~(\ref{thm-gen-train-small-K-any-distribution-eq}). This ease of computation makes the bound particularly practical and appealing for large-scale applications.

\textbf{A theoretical comparison with closely related bounds:}  Although many model-dependent bounds  \citep{kawaguchi22RobustGen,than2025gentle,biggs2022nonvacuousBound,viallard2024PacBayes,lotfi24NonVacuousLLM,lotfi24unlockingLLM} have been proposed, our bound (\ref{thm-gen-train-small-K-any-distribution-eq}) has various advantages: 
\begin{itemize} %[topsep=-3px]
\item \textit{Mild assumption:} Our bound does not require stringent assumptions as in prior ones. Some prior bounds  require stability \citep{li24algorithmic,lei2020stabilitySGD} or robustness \citep{xu2012robustnessGeneralize,kawaguchi22RobustGen,sokolic2017robustDNN} of the learning algorithm. Those assumptions are often violated in practice, e.g. for the appearance of adversarial attacks \citep{zhou2022adversarial}. Some theories \citep{lotfi24NonVacuousLLM,lotfi24unlockingLLM} assume that the hypothesis class is finite, which is restrictive. In contrast, our bound requires only   i.i.d. and loss' boundedness assumptions, which are commonly used in prior bounds. 

\item \textit{Easy and cheap evaluation:} An evaluation of our bound (\ref{thm-gen-train-small-K-any-distribution-eq}) will be simple and does not require any modification to the model $\vh$ of interest, while requiring a low complexity (see Appendix~\ref{app-Computational-complexity}). Those are crucial advantages. Many prior theories require intermediate steps to change the model of interest into a suitable form. For example, state-of-the-art  methods to compress NNs are required for \citep{zhou2019CompressionBound,lotfi2022pac,nadjahi24slicingMI}; quantization for a model is required for \citep{lotfi24NonVacuousLLM,lotfi24unlockingLLM}; finetuning (e.g. SubLoRA) is required for \citep{lotfi24NonVacuousLLM,lotfi24unlockingLLM}. Those facts suggests that evaluations for prior bounds are often expensive. Besides, many prior model-dependent bounds \citep{xu2012robustnessGeneralize,kawaguchi22RobustGen,than2025gentle}  cannot be exactly computed.

\item \textit{No change to the model:} Most prior non-vacuous bounds \citep{zhou2019CompressionBound,dziugaite2017computingBounds,lotfi24NonVacuousLLM,lotfi24unlockingLLM} require extensively compressing (or quantizing) model $\vh$ of interest and then retraining/finetuning the compressed version. Sometimes the compression step is too restrictive and produces low-quality models \citep{lotfi24NonVacuousLLM}. Therefore, a modification will change model $\vh$ and hence \textbf{\textit{those bounds do not directly provide guarantees for the generalization ability of $\vh$}}. In contrast, our bound (\ref{thm-gen-train-small-K-any-distribution-eq}) does not require any change to model $\vh$, and hence directly provides a guarantee for $\vh$.
\end{itemize}

\begin{remark}
There is a nonlinear relationship between $K$ and the uncertainty term $\text{Unc}(\Gamma) = C\sqrt{\hat{u}\alpha \ln\gamma } + g_2(\delta/2)$ in our bound. A partition with a larger $K$ can make the sum $\sum_{i=1}^K \left( {n_i}/{n} \right)^2$ smaller, as the samples can be spread into more  areas. However a larger $K$ can make $g_2(\delta)$ larger. Therefore, we should not choose too large $K$. On the other hand, a small $K$ can make the sum $\sum_{i=1}^K \left( {n_i}/{n} \right)^2$ large, since more samples can appear in each area $\gZ_i$ and enlarge ${n_i}/{n}$. Therefore, we should not choose too small $K$.
\end{remark}

\begin{remark}[Partition choice]
The partition $\Gamma$ is a key design choice in our framework, as it strongly influences both the tightness and informativeness of the bounds, but it must be specified independently of the trained model $\vh$ and the training set $\mS$ to ensure validity. Within this constraint, $\Gamma$ may depend on the data distribution (P), the hypothesis class, or the learning algorithm, allowing the use of domain knowledge to capture meaningful structure. In practice, one can construct $\Gamma$ using auxiliary data sources independent of $\mS$ (e.g., unlabeled data or a held-out split). Moreover, representation-aware partitions are both valid and often advantageous: defining $\Gamma$ in the feature space of a good pretrained model often yields semantically meaningful regions, making locality easier to capture than in raw input spaces such as pixels.
\end{remark}

\section{Empirical evaluation} \label{sec-evaluation}

In this section, we present an extensive empirical evaluation of our bounds. We first investigate the strength of the guarantees for test error of trained large-scale models without any modification. We then examine the key predictive factors that control the bounds and model's generalization. Appendix~\ref{app-simple-models} provides more investigations about traditional ML models and Appendix~\ref{app-Comparation-prior-bounds} compares with closely related bounds which cannot be exactly computable.

\begin{table*}[h!]
\centering
\caption{Upper bounds on the true error (in \%) of 30 deep NNs which were pretrained on ImageNet dataset.  The second column presents the model size, the third column contains the test accuracy at Top 1, as reported by Pytorch. The last three columns report our estimates about the true error, with a certainty at least 95\%.}
\label{tab:sup-imagenet-bound-train-only}
\addtolength{\tabcolsep}{-0.2em}
{\small 
\begin{tabular}{|lcccc|ccc|}
\hline 
\multirow{2}{*}{\textbf{Model}} & \multirow{2}{*}{\textbf{\#Params} (M)} & \multirow{2}{*}{\textbf{Training error}}  & \multirow{2}{*}{\textbf{Acc@1}} &  \multirow{2}{*}{\textbf{Test error}} &  \multicolumn{3}{c|}{\textbf{Error bound (\ref{thm-gen-train-small-K-any-distribution-eq})} } \\
& & & & & $\Gamma_0$ & $\Gamma_1$ & $\Gamma_2$ \\
\hline
%ResNet18 V1           & 11.7  & 21.245 & 69.758 & 30.242 & 57.896  & 47.636    & 42.161    \\
%ResNet34 V1           & 21.8  & 15.669 & 73.314 & 26.686 & 52.320  & 47.636    & 42.161    \\
ResNet50 V1           & 25.6  & 13.121 & 76.130 & 23.870 & 49.772  & 45.088    & 39.613    \\
ResNet101 V1          & 44.5  & 10.502 & 77.374 & 22.626 & 47.153  & 42.469    & 36.994    \\
ResNet152 V1          & 60.2  & 10.133 & 78.312 & 21.688 & 46.784  & 42.100    & 36.625    \\
ResNet50 V2           & 25.6  & 8.936  & 80.858 & 19.142 & 45.587  & 40.903    & 35.428    \\
ResNet101 V2          & 44.5  & 6.008  & 81.886 & 18.114 & 42.659  & 37.975    & 32.500    \\
ResNet152 V2          & 60.2  & 5.178  & 82.284 & 17.716 & 41.829  & 37.145    & 31.670    \\ \hdashline
SwinTransformer B     & 87.8  & 6.464  & 83.582 & 16.418 & 43.115  & 38.431    & 32.956    \\
SwinTransformer B V2  & 87.9  & 6.392  & 84.112 & 15.888 & 43.043  & 38.359    & 32.884    \\
SwinTransformer T     & 28.3  & 9.992  & 81.474 & 18.526 & 46.643  & 41.959    & 36.484    \\
SwinTransformer T V2  & 28.4  & 8.724  & 82.072 & 17.928 & 45.375  & 40.691    & 35.216    \\ \hdashline
VGG13                 & 133.0 & 18.456 & 69.928 & 30.072 & 55.107  & 50.423    & 44.948    \\ 
VGG13 BN              & 133.1 & 19.223 & 71.586 & 28.414 & 55.874  & 51.190    & 45.715    \\
VGG19                 & 143.7 & 16.121 & 72.376 & 27.624 & 52.772  & 48.088    & 42.613    \\
VGG19 BN              & 143.7 & 15.941 & 74.218 & 25.782 & 52.592  & 47.908    & 42.433    \\ \hdashline
DenseNet121           & 8.0   & 15.631 & 74.434 & 25.566 & 52.282  & 47.598    & 42.123    \\
DenseNet161           & 28.7  & 10.48  & 77.138 & 22.862 & 47.131  & 42.447    & 36.972    \\
DenseNet169           & 14.1  & 12.395 & 75.600 & 24.400 & 49.046  & 44.362    & 38.887    \\
DenseNet201           & 20.0  & 9.806  & 76.896 & 23.104 & 46.457  & 41.773    & 36.298    \\ \hdashline
ConvNext Base         & 88.6  & 5.209  & 84.062 & 15.938 & 41.860  & 37.176    & 31.701    \\
ConvNext Large        & 197.8 & 3.846  & 84.414 & 15.586 & 40.497  & 35.813    & 30.338    \\ \hdashline
RegNet Y 128GF linear & 644.8 & 9.032  & 86.068 & 13.932 & 45.683  & 37.532    & 32.057    \\
RegNet Y 32GF linear  & 145.0 & 10.558 & 84.622 & 15.378 & 47.209  & 40.999    & 35.524    \\
RegNet Y 32GF V2      & 145.0 & 3.761  & 81.982 & 18.018 & 40.412  & 39.094    & 33.619    \\ 
RegNet Y 32GF e2e     & 145.0 & 7.127  & 86.838 & 13.162 & 43.778  & 42.525    & 37.050    \\
RegNet Y 128GF e2e    & 644.8 & 5.565  & 88.228 & 11.772 & 42.216  & 35.728    & 30.253    \\ \hdashline
VIT H 14 linear       & 632.0 & 9.951  & 85.708 & 14.292 & 46.602  & 46.936    & 41.461    \\
VIT B 16 linear       & 86.6  & 14.969 & 81.886 & 18.114 & 51.620  & 37.883    & 32.408    \\
VIT L 16 linear       & 304.3 & 11.003 & 85.146 & 14.854 & 47.654  & 41.918    & 36.443    \\
VIT B 16 V1           & 86.6  & 5.916  & 81.072 & 18.928 & 42.567  & 42.970    & 37.495    \\
VIT L 16 V1           & 304.3 & 3.465  & 79.662 & 20.338 & 40.116  & 35.432    & 29.957    \\ \hline
\end{tabular}}
%\vspace{-5pt}
\end{table*}

\subsection{Guarantees for large-scale pretrained models}

\textbf{Models:} We use 30 modern NN models\footnote{https://pytorch.org/vision/stable/models.html} which were pretrained by Pytorch on the ImageNet dataset with 1,281,167 images. All models are multiclass classifiers. We use the ImageNet training set exclusively to compute Bound (\ref{thm-gen-train-small-K-any-distribution-eq}).

\textbf{Baselines:} While many model-dependent bounds exist, we exclude them from direct comparison for the following reasons: (1) several bounds~\citep{kawaguchi22RobustGen,than2025gentle,Luxburg04Lipschitz,hou2023instanceGen} cannot be computed exactly from the training set alone; (2) all norm-based bounds~\citep{bartlett2017SpectralMarginDNN,arora2018strongerBounds,golowich2020RC,graf2022measuring,galanti2023normDNN} are vacuous even for relatively small networks; and (3) certain PAC-Bayes bounds~\citep{biggs2022nonvacuousBound} apply only to shallow or specialized architectures, while others~\citep{zhou2019CompressionBound,dziugaite2017computingBounds,lotfi2022pac} estimate $\E_{\hat{\vh}}[F(P, \hat{\vh}) ]$, the expected test error of a stochastic model. Those bounds and the ones in \citep{lotfi24NonVacuousLLM,lotfi24unlockingLLM} require substantial modifications to the original network. Such requirements render them incompatible with our evaluation setting.

\textbf{Experimental settings:} We fix $K=200, \delta = 0.01, \alpha =100, \gamma = 0.04^{-1/\alpha}$. This choice means that our bound is correct with probability at least 95\%. The upper bound (\ref{thm-gen-train-small-K-any-distribution-eq}) for each model was computed with 5 random seeds. We use the 0-1 loss function,  meaning that our bound directly estimates the true classification error. 

\textbf{Partitions:} We investigated three different ways to define the partition. \textbf{Baseline} $\Gamma_0$ whose centroids are initialized randomly in the input space; \textbf{Representation-based} $\Gamma_1$ whose centroids are initialized randomly in the feature space of a pretrained ResNet-18; 
\textbf{Validation-based} $\Gamma_2$ whose centroids are obtained via K-means on the ImageNet validation set. A sample will be assigned to the area whose centroid is closest. Note that $\Gamma_0$ is the most naive way when we do not have any knowledge about the data space, while $\Gamma_2$ exploits prior knowledge through a held-out dataset and hence should align better with the data distribution. Those partitions reflect different scenarios in practice.

\textbf{Results:} The overall results are reported in Table~\ref{tab:sup-imagenet-bound-train-only}. One can observe that our bound for all models are all non-vacuous even for the non-optimized choices of some parameters. We observe that $\Gamma_1$ consistently improves over $\Gamma_0$, indicating that representation-aligned partitions enhance alignment with data geometry and yield tighter bounds. $\Gamma_2$ achieves the best results, better than those of $\Gamma_0$ by ~10\% in absolute value across all models, highlighting the benefit of using independent data to construct geometry-aligned partitions. 

When the parameters are well chosen, one can obtain much better error bounds. Note that non-vacuousness of our bound holds true for a large class of deep NN families, some of which have more than 630M parameters. To the best of our knowledge,  bound (\ref{thm-gen-train-small-K-any-distribution-eq})  is the first theoretical bound which is non-vacuous at such a large scale, without  any modification to the trained models.

\subsection{Predictive factors of generalization}\label{app-understanding-generalization}

We next investigate \textit{how predictive are the bounds and where are predictive factors of generalization? }These are important when one wants to understand the main factors that lead to better generalization of a model. To this end, we focus on the following quantities in Bound (\ref{thm-gen-train-small-K-eq}): \\
\begin{eqnarray}
{Align} &=& \sum_{i \in T} a_i(h)\sqrt{n_i/n}, \qquad \qquad 
{Fair} = \sum_{i \in T} a_i(h) \\
{Behavior} &=& {Align} \sqrt{\frac{2 \ln(2K/\delta_2)}{n}}    + {Fair} \frac{2\ln(2K/\delta_2)}{n}
\end{eqnarray}

%    {Align} = \sum_{i \in T} a_i(h)\sqrt{n_i/n} $\\
%    ${Fair} = \sum_{i \in T} a_i(h) $\\
%    ${Behavior} = {Align} \sqrt{\frac{2 \ln(2K/\delta_2)}{n}}    + {Fair} \frac{2\ln(2K/\delta_2)}{n}$

Note that $Align$ tells how well the model's local error can match with the data distribution. A better model should align better with the distribution's complexity, hence making $Align$ smaller. Meanwhile $Fair$ tells the macro-level error of model $h$. It also suggests how fair for different local areas the model is. Finally, $Behavior$ unifies them to be an important part of Bound (\ref{thm-gen-train-small-K-eq}).

We use $K=200, \delta_2=0.01$ and the ImageNet validation set to compute those quantities. The results for 6 pretrained models are reported in Table~\ref{tab:understanding-generalization-Align-Fair}. We can observe that all of those quantities have extremely high correlations to the test error. $Align$ has the highest correlation, but $Fair$ has the lowest one. Appendix~\ref{app-understanding-generalization} provides more insights into the high correlation of $Align$. These results demonstrate that $Align$ can exhibit the quality of a model, and can be an accurate indicator for comparison between two models. The strong correlation between \textit{the data-model behavior alignment} (as measured by $Align$) and the test error suggests that such an alignment may be a critical factor in generalization for modern large models.

\begin{table}[tp]
\centering
\caption{Correlation between different factors with test error.}
\addtolength{\tabcolsep}{-0.25em}
{
\begin{tabular}{|l|c|c|c|c|}
\hline
Model                     & Test error & Align   & Fair    & Behavior \\ \hline
ResNet18 V1               & 0.302    & 3.289 & 64.683 & 0.014  \\
ResNet101 V1              & 0.226    & 2.433 & 46.086 & 0.011  \\
ResNet152 V2              & 0.177    & 1.931 & 38.820 & 0.009  \\
DenseNet201               & 0.231    & 2.383 & 45.875 & 0.010  \\
SwinTransformer B         & 0.159    & 1.723 & 33.167 & 0.008  \\
VIT B 16 linear           & 0.181    & 2.141 & 43.766 & 0.009  \\ \hline
\textbf{Correlation to test error} &            & \textbf{0.986} & \textbf{0.964} & \textbf{0.984} \\ \hline
\end{tabular}
}
\label{tab:understanding-generalization-Align-Fair}
%\vspace{-5pt}
\end{table}

\subsection{Ablation study}

\textbf{Parameters}: Note that our bound depends on the choice of some parameters. Figure~\ref{fig:uncertainty-sum-K-change} reports the changes of $\sum_{i=1}^K \left( \frac{n_i}{n} \right)^2$ as the partition $\Gamma$ changes. We can see that this quantity tends to decrease as we divide the input space into more small areas. Meanwhile, Figure~\ref{fig:uncertainty-Alpha-change} reports the uncertainty term, as either $\alpha$ or $K$ changes. Observe that a larger $K$ can increase the uncertainty fast, while an increase in $\alpha$ can gradually decrease the uncertainty. Those figures enable an easy choice for the parameters in our bound.

\begin{figure}[tp]
    \centering
   \includegraphics[width=.7\linewidth]{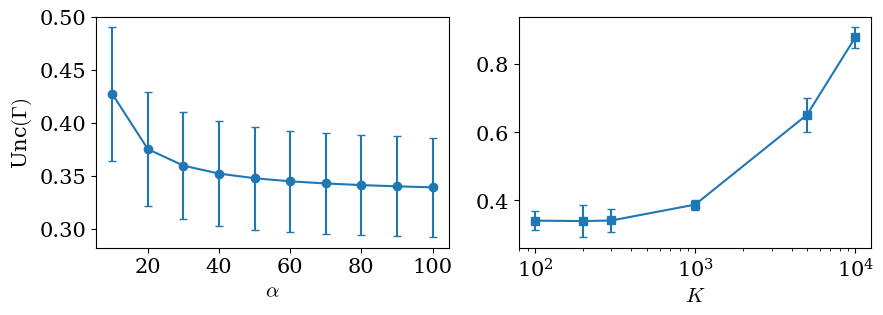}
       \caption{The uncertainty $\text{Unc}(\Gamma) = C\sqrt{\hat{u}\alpha \ln\gamma } + g_2(\delta/2)$ as (left) $\alpha$ changes, for fixed $K = 200$, and (right) $K$ changes.} %\gamma = 0.04^{-1/\alpha}, \delta = 0.01$.}
       \label{fig:uncertainty-Alpha-change}
    
     \begin{minipage}{.5\textwidth}
        \includegraphics[width=\linewidth]{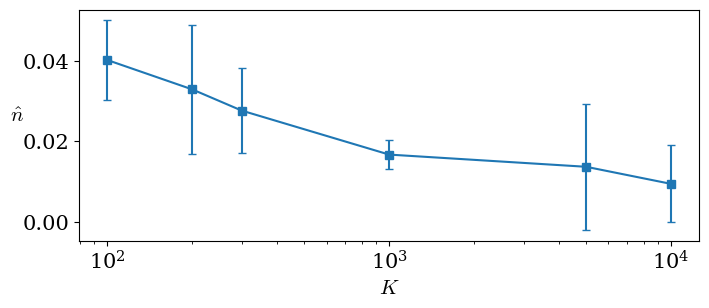}
        \caption{Distributional complexity $\hat{n} = \sum_{i} \left( \frac{n_i}{n} \right)^2$ observed from ImageNet, as $K$ changes.}
        \label{fig:uncertainty-sum-K-change}
        \end{minipage} \hspace{5pt}
    \begin{minipage}{.45\textwidth}
        \includegraphics[width=\linewidth]{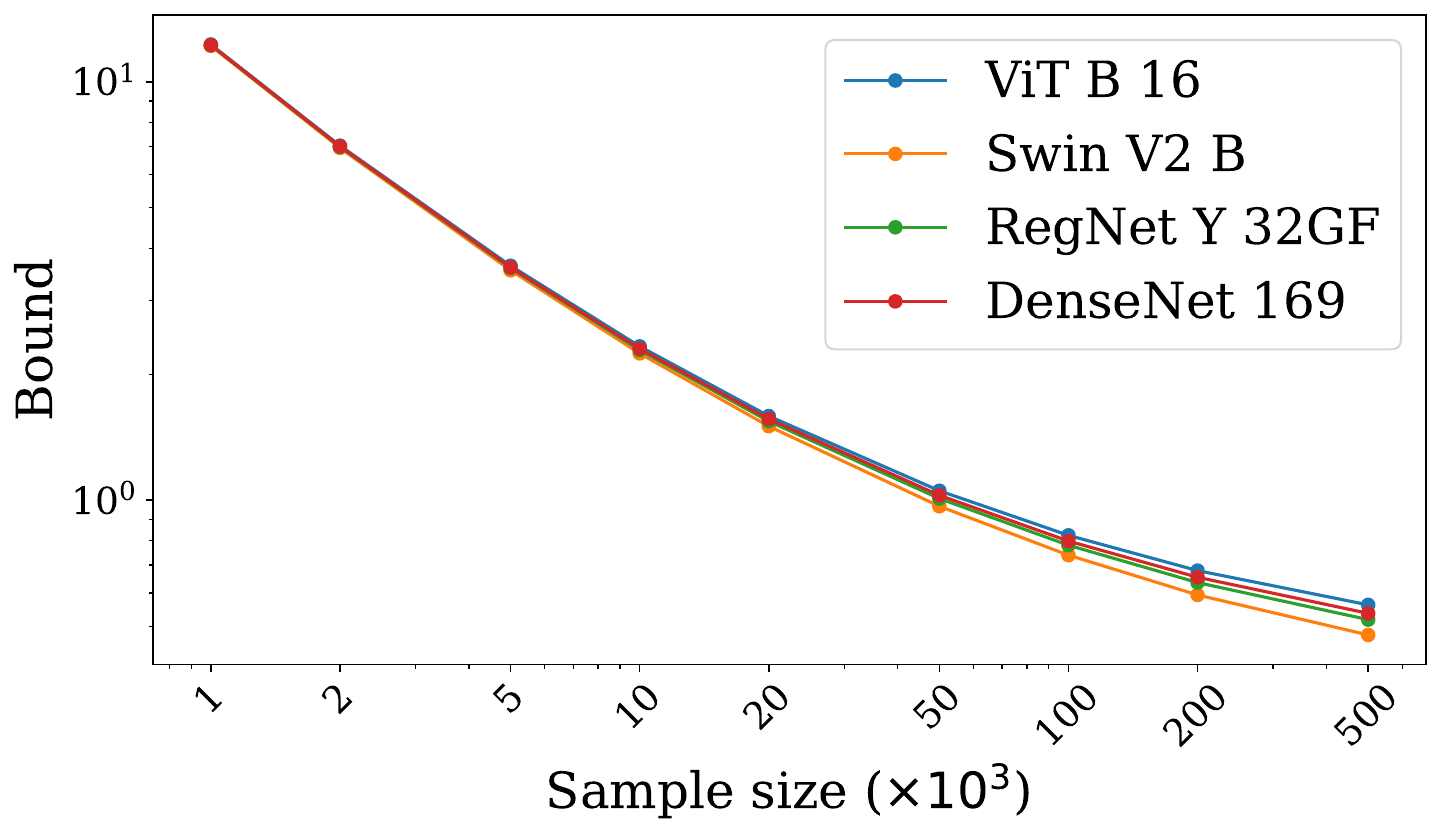}
                \caption{Bound (\ref{thm-gen-train-small-K-any-distribution-eq}) as $n$ increases.} 
                \label{fig-bound-sample-size}
        \end{minipage}
% \vspace{-15pt}
 \end{figure}

\textbf{Sample size:} Figure~\ref{fig-bound-sample-size} illustrates the dependence of our bound on the sample size $n$. As $n$ increases, the bound contracts rapidly. Although it remains vacuous in low-data regimes, Bound~(\ref{thm-gen-train-small-K-any-distribution-eq}) becomes non-vacuous once the sample size exceeds roughly ten thousand. This behavior highlights a limitation of our approach when data are scarce, where certain classical bounds \citep{nadjahi24slicingMI,biggs2022nonvacuousBound,dong2025exactly} may provide tighter guarantees. However, in data-rich regimes, our bound tightens quickly and ultimately outperforms these alternatives, demonstrating its effectiveness for modern large-scale settings.

\textbf{Other ablations:} We also did various ablation studies, including (i) Impact of the data-partition alignment (Appendix~\ref{app-data-partition-alignment}), (ii) Correlation between test error and our overall bounds (Appendix~\ref{app-Correlation-test-error-bounds}), (iii) Cost for computing the bound (Appendix~\ref{app-Computational-complexity}). Due to page limit, we present those in appendices.

% \vspace{-5pt}
\section{Conclusion and Discussion} \label{sec-Conclusion}

Understanding and certifying the behavior of modern deep networks remains a foundational challenge for reliable machine learning. This work introduces a new class of \textit{data-dependent generalization bounds} that apply directly to trained models, without compression, architectural modification, or retraining. Among them, the exactly computable bound stands out: it is non-vacuous across all evaluated ImageNet-scale models. This closes part of a longstanding gap in learning theory. %by showing that high-capacity models can admit informative certificates when analysis is aligned with the {geometry of the data distribution}.

A central insight of our framework is that generalization can be decomposed into two interpretable components: (1) a \textit{distributional complexity term}, capturing how concentrated or diffuse the data distribution is across the partition; and (2) \textit{local model-behavior term}, capturing how the trained network behaves in specific regions of the input space.
This joint dependence does not appear in classical bounds. It reveals where the generalization gap arises and why, highlighting the specific regions and local behaviors responsible. Empirically, we observe that components of Bound~(\ref{thm-gen-train-small-K-eq}), such as the local-loss–weighted concentration term, are highly predictive of the true test error. 

At the same time, our framework has some limitations that point to fertile directions for future research. The bounds can be loose when partitions are poorly aligned with the data distribution or when sample sizes are small, since multinomial counts may be far from their expectations. Likewise, when the model underfits the data, the empirical loss dominates the bound. These failure modes clarify that the approach naturally excels in high-data, high-capacity regimes  while classical bounds may remain more effective for small datasets or highly structured models.

\newpage
\appendix
\onecolumn

\section{Proofs for main results}\label{app-Proofs-main-results}

\begin{proof}[Proof of Theorem \ref{thm-gen-train-small-K}]
We first observe that
\begin{eqnarray}
\label{eq-app-thm-gen-train-small-K-1}
F(P,\vh)- F(\mS,\vh) =  F(P,\vh) - \sum_{i=1}^K \frac{n_i}{n} a_i(\vh) + \sum_{i=1}^K \frac{n_i}{n} a_i(\vh) - F(\mS,\vh)
\end{eqnarray}

Next, we consider $F(P, \vh) -  \sum_{i=1}^K \frac{n_i}{n} a_i(\vh) =  \sum_{i=1}^K p_i a_i(\vh)  -  \sum_{i=1}^K \frac{n_i}{n} a_i(\vh) =  \sum_{i=1}^K a_i(\vh) \left[p_i  -\frac{n_i}{n}\right]$.  Note that $(n_1, ..., n_K)$ is a multinomial random variable with parameters $n$ and $(p_1, ..., p_K)$. Therefore, according to Lemma~7 in \citep{kawaguchi22RobustGen}, we have 
$\Pr\left(\sum_{i=1}^{K } a_i(\vh) \left[p_i -  \frac{n_i}{n} \right]   > g(\Gamma,\vh,\delta_2)  \right) < \delta_2 $.
This implies 
\begin{eqnarray}
\label{eq-app-thm-gen-train-small-K-2}
\Pr\left(F(P, \vh) -  \sum_{i=1}^K \frac{n_i}{n} a_i(\vh)   > g(\Gamma,\vh,\delta_2)  \right) < \delta_2
\end{eqnarray}

On the other hand, Theorem \ref{thm-gen-general-data-part} below shows that
\begin{eqnarray}
\Pr\left(  \sum_{i \in \mT} \frac{n_i}{n} a_i(\vh) - F(\mS,\vh)  \ge C\sqrt{\frac{u}{2n^2} \ln \frac{1}{\delta_1} } \right) &\le&  \delta_1
\end{eqnarray}
Combining this with (\ref{eq-app-thm-gen-train-small-K-2}) and the union bound, we have
\begin{eqnarray}
\Pr\left(F(P, \vh)  > F(\mS,\vh)  + C\sqrt{\frac{u}{2n^2} \ln\frac{1}{\delta_1} } + g(\Gamma,\vh,\delta_2) \right)  < \delta_1 + \delta_2
\end{eqnarray}
 completing the proof.
\end{proof}

\begin{proof}[Proof of Corollary \ref{cor-gen-train-small-K-continuous}]
A simple consequence of using quantiles for continuous distributions \citep{hallin2021distribution,figalli2018continuity} suggests that there exists a partition $\Gamma^*(\gZ) := \bigcup_{i=1}^{K } \gZ_i^*$ so that $P(\gZ_i^*) = \frac{1}{K}, \forall i \in [K]$. The  result of this corollary can be derived by applying Theorem~\ref{thm-gen-train-small-K} for paritition $\Gamma^*$, where $p_i = 1/K$ for all $i$.
\end{proof}

\begin{proof}[Proof of Theorem \ref{thm-gen-train-small-K-any-distribution}]

Theorem \ref{thm-gen-train-small-K} shows that 
\begin{eqnarray}
\label{thm-gen-train-small-K-any-distribution-eq-0}
\Pr\left(F(P, \vh)  > F(\mS,\vh)  + C\sqrt{\frac{u}{2n^2} \ln\frac{1}{\delta_1} } + g(\Gamma,\vh,\delta/2) \right)  < \delta_1 + \delta/2
\end{eqnarray}
where $u$ and $\delta_1$ depend on the sum $\sum_{i=1}^K p_i^2$. We next bound this quantity using $\mS$.

Since $p_i \ge 0$ and $\sum_{i=1}^K p_i = 1$, we can  use the Lagrange multiplier method to show that $\sum_{i=1}^K p_i^2$ is minimized at $1/K$. Hence $u = \sum_{i=1}^K \gamma n p_i(1+ \gamma  n p_i) = \gamma n + \gamma^2 n^2 \sum_{i=1}^K p_i^2 \ge \gamma n + \gamma^2 n^2 /K$. This suggests that $\exp(- \frac{ u \ln\gamma }{4n-3}) \le \exp(- \frac{(\gamma n + \gamma^2 n^2 /K) \ln\gamma }{4n-3}) \le \exp(- \frac{\gamma n(K + \gamma n) \ln\gamma }{K(4n-3)}) \le \gamma^{-\alpha}$. Choosing $\delta_1 = \gamma^{-\alpha}$ and plugging it into (\ref{thm-gen-train-small-K-any-distribution-eq-0}) lead to
\begin{equation}
\Pr\left(F(P, \vh) > F(\mS,\vh)  + C\sqrt{\frac{u}{2n^2}\alpha \ln\gamma  } + g(\Gamma,\vh,\delta/2)  \right) < \delta/2 + \gamma^{-\alpha}
\end{equation}
It is easy to see that $g(\Gamma,\vh,\delta/2) \le g_2(\delta/2)$, since $a_o(\vh) \le C$ and $a_i(\vh) \le C$ for any $i$. Therefore
\begin{equation}
\label{thm-gen-train-small-K-any-distribution-eq-1}
\Pr\left(F(P, \vh) > F(\mS,\vh)  + C\sqrt{\frac{u}{2n^2}\alpha \ln\gamma  } + g_2(\delta/2) \right) < \delta/2 + \gamma^{-\alpha}
\end{equation}

Next we consider  $\frac{u}{2n^2} = \frac{\gamma }{2n} +  \frac{\gamma^2 }{2}\sum_{i=1}^K p_i^2$. Since $\mS$ contains $n$ i.i.d. samples, $(n_1, ..., n_K)$ is a multinomial random variable with parameters $n$ and $(p_1, ..., p_K)$.  Lemma~\ref{lem-multinomial-square} shows 
\[\Pr\left( \sum_{i=1}^K p_i^2  > \sum_{i=1}^K \left( \frac{n_i}{n} \right)^2 + 2 \sqrt{\frac{2}{n}\ln\frac{2K}{\delta}} \right) < \delta/2\] 
Therefore $\Pr\left( \frac{u}{2n^2} > \frac{\gamma}{2n} +  \frac{\gamma^2 }{2}\sum_{i=1}^K \left( \frac{n_i}{n} \right)^2 + \gamma^2 \sqrt{\frac{2}{n}\ln\frac{2K}{\delta}} \right) < \delta/2$. This also suggests that 
\begin{eqnarray}
 \label{thm-gen-train-small-K-any-distribution-eq-4}
\Pr\left( C\sqrt{\frac{u}{2n^2}\alpha \ln\gamma  }  > C\sqrt{\hat{u}\alpha \ln\gamma  }  \right) < \delta/2
 \end{eqnarray}
Combining this with (\ref{thm-gen-train-small-K-any-distribution-eq-1}) and the union bound will complete the proof.
\end{proof}

\subsection{Approximating the intractable part by a data set}

\begin{theorem} \label{thm-gen-general-data-part}
Given the notations in Theorem \ref{thm-gen-train-small-K},
\begin{eqnarray}
 \label{thm-gen-general-data-part-001}
\Pr\left( \sum_{i \in \mT} \frac{ n_i}{n} a_i(\vh) \ge \sum_{i \in \mT} \frac{n_i}{n} F(\mS_i, \vh)  + C\sqrt{\frac{u}{2n^2} \ln \frac{1}{\delta_1} } \right) &\le&  \delta_1
% \label{thm-gen-general-data-part-002}
%\Pr\left( \sum_{i \in \mT} \frac{n_i}{n} F(\mS_i, \vh)  \ge \sum_{i \in \mT} \frac{ n_i}{n} a_i(\vh)  + C\sqrt{\frac{u}{2n^2} \ln \frac{1}{\delta_2} } \right) &\le&  \delta_2
\end{eqnarray}
\end{theorem}

\begin{proof}  
Denote $\vn = \{n_1, ..., n_K\}$ and for each $j \in [K]$ and function $\vv$: 
\begin{eqnarray}
B_j(\vv) &=& \sum_{i=1}^j  n_i a_i(\vv) - \sum_{i=1}^j n_i F(\mS_i, \vv) \\
X_j(\vv) &=& n_j F(\mS_j, \vv) \\
\mS_{\le j} &=& \bigcup_{i \le j} \mS_i
\end{eqnarray}
Denote $y = \frac{4t}{u C^2}$ for any $t \in \left[0, uC\sqrt{\frac{\ln\gamma}{8n-6}}\right]$. The proof for (\ref{thm-gen-general-data-part-001}) contains three main steps.

\textbf{Step 1:} We first observe that
\begin{align}
\Pr\left(B_K(\vh) \ge t \right) &\le e^{-y t} \E_{\mS}\left[e^{y B_K(\vh)}\right] & (\text{Chernoff bounds})\\
 \label{thm-gen-general-data-part-01}
 &\le  e^{-y t}  \E_{\vv,\vn}\left[ \E_{\mS}\left[e^{y B_K(\vv)} | \vv, \vn\right] \right] & (\text{Law of total expectation})
\end{align}

\textbf{Step 2 -  estimating $\E_{\mS}\left[e^{y B_K(\vv)} | \vv, \vn\right]$:} We observe the following  for each $j \in \mT$,
\begin{align}
\E_{X_j} [X_j | \vv,\vn] &= \E_{\mS_j} [n_j F(\mS_j, \vv) | \vv,\vn] \\
&=  \E_{\mS_j} \left[ \sum_{i=1}^{n_j} \ell(\vv,\vz_{ji}) | \vv,\vn \right] & (\text{where } \mS_j = \{\vz_{ji}\}_{i=1}^{n_j})\\
&=  \sum_{i=1}^{n_j} \E_{\vz_{ji} \in \gZ_j} \left[\ell(\vv,\vz_{ji}) | \vv,\vn \right] & (\mS_j \text{ contains i.i.d. samples in } \gZ_j)\\
&= \sum_{i=1}^{n_j}  a_j(\vv) = n_j a_j(\vv)
\end{align}
Therefore $B_j = B_{j-1} + \E_{X_j} [X_j | \vv,\vn] - X_j$ for all  $j \in \mT$. Note that $B_i = B_{i-1}$ (due to $n_i = b_i = X_i = 0$) for all $ i \notin \mT$. Hence, for $ i \notin \mT$, we will use $\E_{X_i} [X_i | \vv,\vn] - X_i$ instead of 0 in the below analysis  for simplicity of presentation. 

We can rewrite
\begin{align}
\E_{\mS}\left[e^{y B_K(\vv)} | \vv, \vn\right]
 &= \E_{\mS}\left[e^{y (B_{K-1} + \E_{X_K} [X_K | \vv,\vn]  - X_K )} | \vv,\vn\right]\\
 &= \E_{\mS_{\le K}}\left[e^{y (B_{K-1} + \E_{X_K} [X_K | \vv,\vn] - X_K )} | \vv,\vn\right]\\
  \label{thm-gen-general-data-part-02}
 &\le \E_{\mS_{\le K-1}}\left[e^{y B_{K-1}}  | \vv,\vn \right]  \E_{X_K}\left[e^{ y( \E_{X_K} [X_K | \vv,\vn] - X_K )} | \vv,\vn \right] 
\end{align}
where the last inequality comes from the fact that $X_K$ is conditionally independent with $\mS_{\le K-1}$, conditioned on $\{\vv,\vn\}$. 

It is easy to see that $0 \le X_K \le  Cn_K$, due to $0 \le F(\mS_K, \vv) \le C$. Lemma~\ref{lem-Hoeffding-lemma} implies $\E_{X_K}\left[e^{ y( \E_{X_K} [X_K | \vv,\vn] - X_K )} | \vv,\vn \right] \le \exp\left(\frac{y^2 C^2 n_K^2}{8} \right)$. Plugging this into (\ref{thm-gen-general-data-part-02}), we obtain
\begin{align}
  \label{thm-gen-general-data-part-03}
\E_{\mS}\left[e^{y B_K} | \vv, \vn\right]
 &\le \E_{\mS_{\le K-1}}\left[e^{y B_{K-1}}  | \vv,\vn \right]  \exp\left(\frac{y^2 C^2 n_K^2}{8} \right)
\end{align}
Using the same arguments for $X_{K-1}, ..., X_1$, we obtain the followings
\begin{eqnarray}
\nonumber
\E_{\mS}\left[e^{y B_K(\vv)} | \vv, \vn\right]
 &\le& \E_{\mS_{\le K-2}}\left[e^{y B_{K-2}}  | \vv,\vn \right]  \exp\left(\frac{y^2 C^2 n_K^2}{8} + \frac{y^2 C^2 n_{K-1}^2}{8} \right) \\
 \nonumber
 & ...& \\
   \label{thm-gen-general-data-part-04}
 &\le&  \exp\left(\frac{y^2 C^2}{8} \sum_{i=1}^{K}  n_i^2 \right)
\end{eqnarray}

\textbf{Step 3 - bounding $\Pr\left(B_K(\vh) \ge t \right) $:} By combining (\ref{thm-gen-general-data-part-04}) with (\ref{thm-gen-general-data-part-01}), we obtain
\begin{eqnarray}
\Pr\left(B_K(\vh) \ge t \right) 
&\le& e^{-y t} \E_{\vv,\vn} \exp\left(\frac{y^2C^2}{8} \sum_{i=1}^{K} n_i^2 \right) \\
\label{thm-gen-general-data-part-05}
&\le& e^{-y t} \E_{\vn} \exp\left(\frac{y^2C^2}{8} \sum_{i=1}^{K} n_i^2 \right) 
\end{eqnarray}

It is well known that multinomial counts are negatively associated. For such random variables, the following exponential inequality holds \citep{joagdev1983negative,dubhashi1998balls}:
if $f_i$ are coordinate-wise nondecreasing functions, then
\[
\E\prod_{i=1}^K f_i(n_i)
\le
\prod_{i=1}^K \E f_i(n_i).
\]

Now take $f_i(n_i)=\exp(\frac{y^2C^2}{8} n_i^2)$ which is increasing in $n_i$ on $[0,n]$. Therefore,
\[
\E_{\vn} e^{\frac{y^2C^2}{8}  \sum_{i=1}^K n_i^2}
= \E_{\vn} \prod_{i=1}^K e^{\frac{y^2C^2}{8}  n_i^2}
\le \prod_{i=1}^K \E e^{\frac{y^2C^2}{8} n_i^2}.
\]

Plugging this into (\ref{thm-gen-general-data-part-05}), we have 
\begin{eqnarray}
\Pr\left(B_K(\vh) \ge t \right) 
&\le& e^{-y t} \E_{\vn} \exp\left(\frac{y^2C^2}{8} \sum_{i=1}^{K} n_i^2 \right) \\
\label{thm-gen-general-data-part-05.2}
&\le& e^{-y t} \prod_{i=1}^K \E \exp\left(\frac{y^2 C^2}{8} n_i^2 \right). 
\end{eqnarray}

%Each marginal $n_i$ follows $\mathrm{Binomial}(n,p_i)$. Hence it suffices to bound $\E \exp\left(\frac{y^2 C^2}{8} n_i^2 \right)$ for each binomial variable $n_i$.

When $\gamma p_i < 1$, due to $t \le uC\sqrt{\frac{\ln\gamma}{8n-6}}$, observe that $\frac{y^2C^2}{8} = \frac{2t^2 }{u^2 C^2} \le \frac{\ln\gamma}{4n-3}  \le \frac{\ln\gamma}{(1-\gamma p_i)(4n-3)}$. Note that $n_i$ is a binomial random variable with parameters $n$ and $p_i$. Combining those facts with  Lemma \ref{lem-binomial-exp-square} implies $\E_{n_i} \exp\left(\frac{y^2 C^2}{8} n_i^2 \right) \le  \exp\left(\frac{y^2 C^2 }{8} \gamma n p_i(1 + \gamma n p_i) \right)$. On the other hand, Lemma  \ref{lem-binomial-exp-square-large-mean} also implies $\E_{n_i} \exp\left(\frac{y^2 C^2}{8} n_i^2 \right) \le  \exp\left(\frac{y^2 C^2 }{8} \gamma n p_i(1 + \gamma n p_i) \right)$ when  $\gamma p_i \ge 1$. 
As a result, those facts and (\ref{thm-gen-general-data-part-05.2}) lead to the following: 
\begin{eqnarray}
\Pr\left(B_K(\vh) \ge t \right)  &\le& \exp\left(-yt + \frac{y^2 C^2}{8} \sum_{i=1}^{K}  (1  + \gamma n p_i) \gamma n p_i   \right) \\  
&=& \exp\left(-yt + \frac{y^2 C^2 u}{8}  \right) = \exp\left( \frac{ -2 t^2}{uC^2} \right)
\end{eqnarray}
As a result 
\begin{eqnarray}
\Pr\left( \sum_{i =1}^K n_i a_i(\vh) \ge \sum_{i =1}^K n_i F(\mS_i, \vh)  + t \right) &\le&  \exp\left(-\frac{2 t^2}{uC^2}\right) 
\end{eqnarray}
Since $n_j = 0$ for all $j \notin \mT$, we have 
\begin{eqnarray}
\Pr\left( \sum_{i \in \mT} n_i a_i(\vh) \ge \sum_{i \in \mT} n_i F(\mS_i, \vh)  + t \right) &\le&  \exp\left(-\frac{2 t^2}{uC^2}\right) 
\end{eqnarray}
Multiplying both sides (of the probability term) with $1/n$ leads to
\[\Pr\left( \sum_{i \in \mT} \frac{ n_i}{n} a_i(\vh) \ge \sum_{i \in \mT} \frac{ n_i}{n} F(\mS_i, \vh)  + t/n \right) \le  \exp\left(-\frac{2 t^2}{uC^2}\right) \]
Choosing $t = C\sqrt{\frac{u}{2} \ln \frac{1}{\delta_1} }$ results in (\ref{thm-gen-general-data-part-001}), completing the proof.
\end{proof}

\section{Supporting theorems and lemmas}\label{app-Supporting-theorems}

\subsection{Hoeffding's Lemma} \label{sec-lem-Hoeffding-lemma}

\begin{lemma}[Hoeffding's lemma for conditionals] \label{lem-Hoeffding-lemma}
Let $X$ be any real-valued random variable that may depend on some random variables $\mY$. Assume that  $a \le X \le b$ almost surely, for some constants $a,b$. Then, for all $\lambda \in \R$,
\begin{eqnarray}
\label{eq-lem-Hoeffding-lemma-01}
\E_X \left[e^{\lambda (\E_X [X| \mY] - X)} | \mY \right] &\le& \exp\left(\frac{\lambda^2 (b-a)^2}{8} \right) 
\end{eqnarray}
\end{lemma}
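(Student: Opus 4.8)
The plan is to mimic the classical proof of Hoeffding's lemma, but carrying the conditioning on $\mY$ throughout. Fix $\mY$ and write $\mu = \E_X[X \mid \mY]$, which is a measurable function of $\mY$ and satisfies $a \le \mu \le b$ almost surely. Set $\phi(\lambda) = \ln \E_X\left[e^{\lambda(X-\mu)} \mid \mY\right]$; the goal is to show $\phi(\lambda) \le \lambda^2(b-a)^2/8$ almost surely. First I would note $\phi(0) = 0$ and $\phi'(0) = 0$ (the latter because $\E_X[X-\mu \mid \mY] = 0$), so by Taylor's theorem with remainder it suffices to bound the second derivative: $\phi''(\lambda) \le (b-a)^2/4$ for all $\lambda$, and then $\phi(\lambda) = \tfrac{1}{2}\lambda^2 \phi''(\xi) \le \lambda^2(b-a)^2/8$ for some $\xi$ between $0$ and $\lambda$.

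For the second-derivative bound, the standard computation applies verbatim inside the conditional expectation. Introduce, for fixed $\lambda$ and fixed $\mY$, the tilted probability measure $d\tilde{\mathbb{P}} \propto e^{\lambda(X-\mu)}\, d\mathbb{P}(\cdot \mid \mY)$ on the range of $X$. A direct calculation gives $\phi'(\lambda) = \tilde{\E}[X-\mu]$ and $\phi''(\lambda) = \tilde{\Var}(X) = \tilde{\Var}(X-\mu)$, the variance of $X$ under the tilted measure. Since $X \in [a,b]$ almost surely, $X$ is supported in an interval of length $b-a$ under $\tilde{\mathbb{P}}$ as well, so by Popoviciu's inequality (or the elementary fact that $\Var(W) \le (b-a)^2/4$ for any $W$ taking values in $[a,b]$), we get $\phi''(\lambda) \le (b-a)^2/4$. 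This holds for $\mathbb{P}$-almost every value of $\mY$, which is exactly what we need: exponentiating $\phi(\lambda) \le \lambda^2(b-a)^2/8$ yields \eqref{eq-lem-Hoeffding-lemma-01}.

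The only genuine subtlety — and the step I would be most careful about — is measurability and the interchange of differentiation and conditional expectation: one must justify that $\lambda \mapsto \E_X[e^{\lambda(X-\mu)} \mid \mY]$ is twice differentiable with derivatives obtained by differentiating under the conditional-expectation sign. This is routine because $X$ is bounded (so $e^{\lambda(X-\mu)}$ and its $\lambda$-derivatives are dominated locally uniformly by an integrable function), and a regular conditional distribution of $X$ given $\mY$ exists since we are on a standard setup; the dominated convergence theorem then licenses the interchange for almost every $\mY$. Apart from that, everything is the classical argument applied pointwise in $\mY$, so no further obstacles are expected.
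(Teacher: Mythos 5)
Your proof is correct, but it follows a genuinely different route from the paper's. You prove the conditional Hoeffding lemma via the cumulant generating function: setting $\phi(\lambda) = \ln \E_X[e^{\lambda(X-\mu)}\mid \mY]$ with $\mu = \E_X[X\mid\mY]$, you use the exponentially tilted measure to identify $\phi''(\lambda)$ as a variance of a $[a,b]$-valued variable, bound it by $(b-a)^2/4$ (Popoviciu), and conclude by Taylor's theorem. The paper instead runs the classical convexity argument pointwise in $\mY$: it bounds $e^{\lambda(X-\mu)}$ by the chord of the exponential over $[c,d]$ with $c = a-\mu$, $d = b-\mu$, takes the conditional expectation (using only linearity and $\E_X[X-\mu\mid\mY]=0$), and reduces everything to the deterministic scalar inequality $L(h)\le h^2/8$ for $L(h) = \frac{ch}{d-c} + \ln\bigl(1 + \frac{c - e^h c}{d-c}\bigr)$, proved via $L''\le 1/4$ (AM--GM) and Taylor. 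The trade-off is exactly the subtlety you flag yourself: your route needs differentiation under the conditional expectation sign (justified, as you say, by boundedness, dominated convergence, and a regular conditional distribution), whereas the paper's route never differentiates under an expectation --- after the convexity step all calculus is applied to an explicit scalar function of $h$, so no regularity or measurability care is needed. Your version is more conceptual and gives the variance interpretation of the constant $1/8$; the paper's is more elementary and self-contained. Either argument suffices for how the lemma is used in Theorem~\ref{thm-Hoeffding-inequality-multinomial-small-gamma}.
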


\begin{proof}  
Denote $c = \E_X [X| \mY] - b, d = \E_X [X| \mY] - a$ and hence $c \le 0 \le d$.

Since $\exp$ is a convex function, we have the following for all $\E_X [X| \mY] - X \in [c,d]$:

\[ e^{\lambda (\E_X [X| \mY] -X)} \le \frac{d- \E_X [X| \mY] +X}{d-c} e^{\lambda c} + \frac{\E_X [X| \mY] -X -c}{d-c} e^{\lambda d} \]
Therefore, by taking the conditional expectation over $X$ for both sides,
\begin{eqnarray}
\nonumber
\E_X \left[e^{\lambda (\E_X [X| \mY] -X)} | \mY \right] &\le&  \frac{d- \E_X [X| \mY] +\E_X [X| \mY]}{d-c} e^{\lambda c} + \frac{\E_X [X| \mY] -\E_X [X| \mY] -c}{d-c} e^{\lambda d} \\
&=&  \frac{d}{d-c} e^{\lambda c} - \frac{c}{d-c} e^{\lambda d} \\
\label{eq-app-Hoeffding-lemma}
&=&  e^{L(\lambda(d-c))} 
\end{eqnarray}
where $L(h) = \frac{c h}{d-c} + \ln(1 + \frac{c - e^h c}{d-c})$. For this function, note that 

\[L(0) = L'(0) = 0 \text{ and } L''(h) = -\frac{cd e^h}{(d - c e^h)^2} \]

The AM-GM  inequality suggests that $L''(h) \le 1/4$ for all $h$. Combining this property with Taylor's theorem leads to the following, for some $\theta \in [0,1]$, 

\[L(h) = L(0) + h L'(0) + \frac{1}{2} h^2 L''(h\theta) \le \frac{h^2}{8} \]
Combining this with (\ref{eq-app-Hoeffding-lemma}) completes the proof.
\end{proof}

\subsection{Small random variables} \label{app-Small-random-variables}

\begin{lemma}\label{lem-small-exp-square-large-mean}
Let $x_1, ..., x_n$ be independent  random variables in $[0,1]$ and satisfy $\E[x_i] \le \nu, \forall i$ for some $\nu \in [0,1]$. For any $c \ge 1$ satisfying $c \nu \ge 1$ and any  $\lambda \ge 0$, we have $\E \exp \left(\lambda (x_1 + \cdots + x_n)^2\right) \le \exp({\lambda c n\nu(1 + c n\nu)})$.
\end{lemma}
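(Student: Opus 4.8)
The plan is to notice that in the regime $c\nu \ge 1$ the asserted bound is so loose that it holds \emph{deterministically}, for every $\lambda \ge 0$, with no moment computation at all. Set $S := x_1 + \cdots + x_n$. Since each $x_i \in [0,1]$ we have $0 \le S \le n$ almost surely, and therefore $S^2 \le n^2$ almost surely. The role of the hypotheses $c \ge 1$ and $c\nu \ge 1$ is only to push the right-hand side above this crude bound on $S^2$.

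Concretely, I would argue as follows. From $c\nu \ge 1$ we get $cn\nu = n\,(c\nu) \ge n$, hence $(cn\nu)^2 \ge n^2 \ge S^2$ almost surely. Since $cn\nu \ge 0$, we also have $cn\nu(1 + cn\nu) = cn\nu + (cn\nu)^2 \ge (cn\nu)^2 \ge S^2$. Because $\lambda \ge 0$, multiplication by $\lambda$ preserves this inequality, so $\lambda S^2 \le \lambda\, cn\nu(1 + cn\nu)$ almost surely. Applying the increasing map $\exp$ and then taking expectations gives
\[
\E \exp\!\left(\lambda (x_1 + \cdots + x_n)^2\right) \;\le\; \E \exp\!\left(\lambda\, cn\nu(1 + cn\nu)\right) \;=\; \exp\!\left(\lambda\, cn\nu(1 + cn\nu)\right),
\]
which is exactly the claim. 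Note that independence of the $x_i$ and the mean bound $\E[x_i] \le \nu$ are not actually used beyond the implicit fact that $c\nu \ge 1$ forces $\nu > 0$; they are listed only so that the statement dovetails with Lemma~\ref{lem-small-exp-square} under the common hypotheses of Lemma~\ref{lem-small-exp-square-0}.

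There is essentially no obstacle here, which is the point worth flagging: one should \emph{resist} the "natural" route of writing $e^{\lambda S^2} \le e^{\lambda n S}$ (using $S \le n$) and then factoring $\E e^{\lambda n S} = \prod_i \E e^{\lambda n x_i} \le \exp\!\big(n\nu(e^{\lambda n}-1)\big)$, because the exponential $e^{\lambda n}$ eventually outgrows the quadratic-in-$\lambda$ target and the bound fails for large $\lambda$. The genuine analytic work of Lemma~\ref{lem-small-exp-square-0} lives entirely in the complementary case $c\nu < 1$ (Lemma~\ref{lem-small-exp-square}), where $\lambda$ must be restricted to $[0, \tfrac{\ln c}{4n-3}]$ and one does need a careful estimate of the moment generating function of $S$ via $S^2 \le nS$ together with a product bound over the independent $x_i$; I would keep that argument completely separate and invoke Lemma~\ref{lem-small-exp-square-large-mean} only for the large-mean regime.
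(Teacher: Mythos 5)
Your proof is correct, and it is genuinely different from (and much shorter than) the one in the paper. You observe that in the regime $c\nu \ge 1$ the claimed bound holds \emph{pointwise}: since $x_i \in [0,1]$ gives $S := x_1+\cdots+x_n \le n$ almost surely, and $c\nu \ge 1$ gives $cn\nu \ge n$, one has $S^2 \le n^2 \le (cn\nu)^2 \le cn\nu(1+cn\nu)$, so for $\lambda \ge 0$ the inequality $e^{\lambda S^2} \le e^{\lambda cn\nu(1+cn\nu)}$ holds surely and the expectation bound is immediate; neither independence nor the mean condition $\E[x_i]\le\nu$ is actually used. The paper instead proves Lemma~\ref{lem-small-exp-square-large-mean} by the same inductive moment-generating-function argument it uses for Lemma~\ref{lem-small-exp-square}: it peels off one variable at a time, writing $\E e^{\lambda y_n^2}$ in terms of $\E_{x_n} e^{\lambda(2x_n y_{n-1}+x_n^2)}$ and invoking Corollary~\ref{cor-small-random-variable-exp-mix} at each step, the only difference from the small-mean case being that no restriction on $\lambda$ is needed when $c\nu\ge 1$. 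What your route buys is brevity and the insight that in this regime the statement is a deterministic triviality (the large-mean case exists only to make the combined Lemma~\ref{lem-small-exp-square-0} hold for all $\nu$); what the paper's route buys is a presentation exactly parallel to Lemma~\ref{lem-small-exp-square}, so that the reader can see the $\lambda$-restriction is the sole point of divergence. Your cautionary remark about the naive bound $e^{\lambda S^2}\le e^{\lambda nS}$ followed by a product MGF estimate is also well taken: that exponent grows like $e^{\lambda n}$ and eventually exceeds the linear-in-$\lambda$ target, so it cannot deliver the claim for all $\lambda \ge 0$, which is precisely why either your deterministic observation or the paper's conditional peeling is needed.
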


\begin{lemma}\label{lem-small-exp-square}
Let $x_1, ..., x_n$ be independent random variables in $[0,1]$ and satisfy $\E[x_i] \le \nu, \forall i$ for some $\nu \in [0,1]$. For any $c \ge 1$ satisfying $c \nu < 1$ and any  $\lambda \in [0, \frac{\ln c}{(1-c\nu)(4n-3)}]$, we have $\E \exp \left(\lambda (x_1 + \cdots + x_n)^2\right) \le \exp({\lambda c n\nu(1 + c n\nu)})$.
\end{lemma}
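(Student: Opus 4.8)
The plan is to reduce to the extremal configuration of i.i.d.\ Bernoulli inputs, and then estimate the resulting binomial moment against the target, using the admissible range of $\lambda$ in an essential way.

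\textbf{Step 1 (reduction to $S\sim\mathrm{Bin}(n,\nu)$).} For fixed $\lambda\ge0$ and $r\ge0$ the map $t\mapsto\exp(\lambda(t+r)^2)$ is convex and increasing on $[0,1]$. Holding all but one coordinate fixed, $\E\exp\big(\lambda(\sum_j x_j)^2\big)$ is therefore a convex functional of the law of $x_i$ on $[0,1]$ with mean $p_i:=\E[x_i]$, hence is maximised by the two-point law on $\{0,1\}$, i.e.\ $x_i\sim\mathrm{Bernoulli}(p_i)$; and for Bernoulli inputs it is nondecreasing in each $p_i$. Since $p_i\le\nu$, it suffices to treat $x_1,\dots,x_n$ i.i.d.\ $\mathrm{Bernoulli}(\nu)$, i.e.\ to show
\[
\E\exp(\lambda S^2)=1+\sum_{k=1}^{n}\binom nk\nu^k(1-\nu)^{n-k}\big(e^{\lambda k^2}-1\big)\ \le\ \exp\!\big(\lambda cn\nu(1+cn\nu)\big),\qquad S\sim\mathrm{Bin}(n,\nu).
\]
(Equivalently one can run an induction on $n$ via the pointwise-in-$\lambda$ inequality $\E\exp(\lambda S_n^2)\le(1-\nu)\E\exp(\lambda S_{n-1}^2)+\nu\,\E\exp(\lambda(S_{n-1}+1)^2)$, which reduces the whole statement to a shifted version of the $n=1$ case.)

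\textbf{Step 2 (estimating the sum).} The purpose of the range $\lambda\le\frac{\ln c}{(1-c\nu)(4n-3)}$, combined with $c\nu<1$ (equivalently $c<1/\nu$, so $\ln c<\ln(1/\nu)$), is to ensure that $e^{\lambda k^2}$ never overwhelms the binomial weight: using the Chernoff bound $\binom nk\nu^k(1-\nu)^{n-k}\le\exp\!\big(-n\,\mathrm{KL}(k/n\,\Vert\,\nu)\big)$ together with $e^{\lambda k^2}-1\le\lambda k^2 e^{\lambda k^2}$, the quantity to be bounded is $\sum_{k\ge1}\lambda k^2\exp\!\big(\lambda k^2-n\,\mathrm{KL}(k/n\Vert\nu)\big)$. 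Viewed as a function of $q=k/n$, the exponent $\lambda k^2-n\,\mathrm{KL}(q\Vert\nu)$ is maximised near $q=\nu$ — the factor $4n-3$ is precisely what forces $\lambda n$ small enough that the maximiser cannot escape toward $q=1$, which is where $\ln c<\ln(1/\nu)$ is needed — so the sum is governed by $k$ of order $n\nu$ and is of size roughly $\lambda\,\E[S^2]\le\lambda n\nu(1+n\nu)$. Since $cn\nu(1+cn\nu)=n\nu(1+n\nu)+(c-1)n\nu\big(1+(c+1)n\nu\big)$, the extra term (present because $c\ge1$) is exactly the slack used to absorb the large-deviation tail and the second-order corrections; the coefficient $1-c\nu$ in the range of $\lambda$ is what quantifies this absorption.

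\textbf{Step 3 (base case and main obstacle).} For $n=1$ the sum is a single term, and one must show $1+\nu(e^\lambda-1)\le\exp(\lambda c\nu(1+c\nu))$ for $0\le\lambda\le\frac{\ln c}{1-c\nu}$. As $\lambda\mapsto\ln\!\big(1+\nu(e^\lambda-1)\big)$ is convex and vanishes at $0$, it suffices to check this at $\lambda=\frac{\ln c}{1-c\nu}$; writing $\mu=c\nu\in[0,1)$ and $a=c^{\mu/(1-\mu)}\ge1$, the inequality becomes $1-\mu a^{-(1-\mu)/\mu}+\mu a\le a^{1+\mu}$, which holds for all $a\ge1$ since both sides equal $1$ at $a=1$ and the derivative of the right side dominates that of the left on $[1,\infty)$. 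The hard part is Step 2: carrying out the tail/saddle-point estimate uniformly in $k$ and $n$ so that the whole sum fits under $\exp(\lambda cn\nu(1+cn\nu))$ with essentially no room to spare. The bound is tight, so crude substitutions — in particular $k^2\le nk$, which replaces a quantity of mean $\E[S^2]$ by one of mean $n\,\E[S]$ and so loses a factor of order $1/\nu$ — are not strong enough; one really has to keep the sharpened form $e^{\lambda k^2}-1\le\lambda k^2e^{\lambda k^2}$ and the exact rate function, and this is where the precise constants $1-c\nu$ and $4n-3$ become unavoidable.
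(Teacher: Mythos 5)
There is a genuine gap: the entire quantitative content of the lemma is in your Step~2, and Step~2 is a plan rather than a proof — you yourself flag it as ``the hard part'' and leave the tail/saddle-point estimate uncarried-out. The reduction to i.i.d.\ Bernoulli$(\nu)$ (Step~1) and the $n=1$ endpoint check (Step~3) are fine, but they only dispose of the easy parts. What remains is to show $\E e^{\lambda S^2}\le e^{\lambda cn\nu(1+cn\nu)}$ for $S\sim\mathrm{Bin}(n,\nu)$ uniformly in $n$ over the whole range $\lambda\le\frac{\ln c}{(1-c\nu)(4n-3)}$, and the heuristics you give for this are not reliable: in the regime $c\nu\to1$ the admissible $\lambda$ scales like $\frac{\ln c}{4n(1-c\nu)}$, so $\lambda k^2$ at $k=n$ is of order $\frac{n\ln c}{4(1-c\nu)}$, which can far exceed $n\,\mathrm{KL}(1\,\Vert\,\nu)=n\ln(1/\nu)$; the upper tail then contributes at an exponential scale comparable to the target $\exp(\lambda cn\nu(1+cn\nu))$, so the sum is \emph{not} ``of size roughly $\lambda\,\E[S^2]$'' and the claim that the maximiser ``cannot escape toward $q=1$'' because $\ln c<\ln(1/\nu)$ does not hold as stated. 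Whether the Chernoff/rate-function route can be pushed through with exactly the constants $1-c\nu$ and $4n-3$ is precisely what would need to be verified, and nothing in the proposal verifies it. The alternative induction you mention parenthetically ($\E e^{\lambda S_n^2}\le(1-\nu)\E e^{\lambda S_{n-1}^2}+\nu\,\E e^{\lambda(S_{n-1}+1)^2}$) also does not ``reduce to a shifted $n=1$ case'' as claimed: the cross term $2\lambda S_{n-1}$ forces you to strengthen the induction hypothesis to control quantities of the form $\E e^{\lambda(y^2+2\beta y)}$ with a growing coefficient $\beta$, and to check at every step that the effective parameter stays inside the admissible range — this bookkeeping is exactly where the factor $4n-3$ comes from, and it is absent from your write-up.

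For comparison, the paper's proof needs no large-deviation analysis at all. It first proves a one-variable bound ($\E e^{\lambda(aX^2+bX)}\le e^{c(a+b)\nu\lambda}$ whenever $\lambda(a+b)\le\frac{\ln c}{1-c\nu}$, from the elementary estimate $\E e^{\lambda X}\le1-\nu+\nu e^\lambda\le e^{c\nu\lambda}$), then peels off $x_n,x_{n-1},\dots$ one at a time from $(x_1+\cdots+x_n)^2$, conditioning on the partial sum; each peeling produces an extra linear term $2c\nu y_{k}$, and the accumulated coefficient is always at most $2(n-1)+2c\nu(n-1)+1\le4n-3$ (using $c\nu<1$), which is exactly why the hypothesis $\lambda\le\frac{\ln c}{(1-c\nu)(4n-3)}$ keeps every application of the one-variable bound legitimate; multiplying the resulting factors telescopes to $e^{\lambda c n\nu(1+cn\nu)}$. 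If you want to salvage your route, either carry out the binomial-tail estimate uniformly and explicitly, or adopt the sequential-conditioning scheme and track the linear coefficients as the paper does.
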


In order to prove those results, we need the following observations.

\begin{lemma}\label{lem-small-random-variable}
Consider  a  random variable $X \in [0,1]$ with mean $\E[X] \le \nu$ for some constant $\nu \in [0,1]$. For any $c \ge 1, \lambda \ge 0$: 
\begin{itemize}
\item If $c \nu \ge 1$, then $\E e^{\lambda X} \le e^{c\nu \lambda}$.
\item If $c \nu < 1$, then $\E e^{\lambda X} \le e^{c\nu \lambda}$ for all $\lambda \in [0, \frac{\ln c}{1-c\nu}]$.
\end{itemize}
\end{lemma}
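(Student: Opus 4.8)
\textbf{The plan is to} control the moment generating function $\E e^{\lambda X}$ from above by a convexity argument on the interval $[0,1]$, and then show the resulting expression is dominated by $e^{c\nu\lambda}$ in each of the two regimes. First I would note that since $X \in [0,1]$, convexity of $x \mapsto e^{\lambda x}$ gives the linear upper bound $e^{\lambda x} \le 1 - x + x e^{\lambda}$ for $x \in [0,1]$, which after taking expectations and using $\E[X] \le \nu$ (valid because the coefficient $e^\lambda - 1$ of $\E[X]$ is nonnegative for $\lambda \ge 0$) yields $\E e^{\lambda X} \le 1 + \nu(e^{\lambda} - 1) \le e^{\nu(e^\lambda - 1)}$, using $1 + t \le e^t$. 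So it suffices to prove $e^{\nu(e^\lambda - 1)} \le e^{c\nu\lambda}$, i.e. $e^\lambda - 1 \le c\lambda$, in each case.

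\textbf{For the first case} ($c\nu \ge 1$): here I would instead aim to show $\E e^{\lambda X} \le e^{c\nu\lambda}$ directly. Since $c\nu \ge 1 \ge \E[X]$ and $X \le 1 \le c\nu$, the claim $\E e^{\lambda X} \le e^{c\nu\lambda}$ should follow because $e^{c\nu\lambda} \ge e^{\lambda} \ge e^{\lambda X}$ pointwise (as $c\nu \ge 1 \ge X$ and $\lambda \ge 0$), hence $\E e^{\lambda X} \le e^\lambda \le e^{c\nu\lambda}$. This is immediate and requires no convexity.

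\textbf{For the second case} ($c\nu < 1$): here the convexity bound $\E e^{\lambda X} \le e^{\nu(e^\lambda - 1)}$ is the right tool, and I must verify $e^\lambda - 1 \le c\lambda$ for $\lambda \in [0, \tfrac{\ln c}{1 - c\nu}]$. Define $\phi(\lambda) = c\lambda - e^\lambda + 1$; then $\phi(0) = 0$, $\phi'(\lambda) = c - e^\lambda$, so $\phi$ is increasing on $[0, \ln c]$ and decreasing afterward, with $\phi(\ln c) = c\ln c - c + 1 \ge 0$ (since $t \mapsto t\ln t - t + 1 \ge 0$ for $t \ge 1$). The remaining point is to check $\phi(\lambda) \ge 0$ on the full interval up to $\tfrac{\ln c}{1-c\nu}$, which exceeds $\ln c$ when $c\nu < 1$; here $\phi$ is decreasing past $\ln c$, so I would need $\phi\big(\tfrac{\ln c}{1-c\nu}\big) \ge 0$, i.e. $\tfrac{c\ln c}{1-c\nu} \ge e^{\ln c/(1-c\nu)} - 1$. \textbf{The hard part will be} exactly this last inequality — establishing that the upper endpoint $\tfrac{\ln c}{1-c\nu}$ is small enough that $e^\lambda - 1 \le c\lambda$ still holds there; I expect this to reduce, after substituting $s = \ln c$ and writing $\beta = 1 - c\nu \in (0,1]$, to an elementary but slightly delicate inequality of the form $\tfrac{cs}{\beta} \ge c^{1/\beta} - 1$, which I would handle by monotonicity in $\beta$ (the worst case being $\beta \to 0$, where one checks the asymptotics) or by a direct convexity/AM–GM estimate on $c^{1/\beta}$. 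Everything else is routine.
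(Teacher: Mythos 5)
Your Case 1 argument is fine (and even simpler than the paper's: since $c\nu\ge 1\ge X$ and $\lambda\ge 0$, the pointwise bound $e^{\lambda X}\le e^{\lambda}\le e^{c\nu\lambda}$ suffices), and your first step $\E e^{\lambda X}\le 1-\nu+\nu e^{\lambda}$ is exactly the bound the paper also derives (via Taylor expansion and $X^p\le X$ rather than convexity — same estimate). The problem is Case 2. After writing $1+\nu(e^{\lambda}-1)\le e^{\nu(e^{\lambda}-1)}$ you reduce the claim to $e^{\lambda}-1\le c\lambda$ on $[0,\tfrac{\ln c}{1-c\nu}]$, and the endpoint inequality you flag as ``the hard part'' is in fact false: with $c=e$ and $c\nu=0.9$ the interval is $[0,10]$, and at $\lambda=10$ you would need $e^{10}-1\approx 2.2\cdot 10^{4}\le 10e\approx 27$. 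In your notation $\tfrac{cs}{\beta}\ge c^{1/\beta}-1$ fails badly as $\beta\to 0$, since the left side grows like $1/\beta$ while the right side grows like $c^{1/\beta}$; there is no delicate elementary argument to be found, the statement is simply wrong. The culprit is the step $1+t\le e^{t}$: it throws away the factor $\nu$ multiplying $e^{\lambda}-1$, and that damping factor is precisely what makes the lemma true on the stated interval (note $\E e^{\lambda X}\le 1-\nu+\nu e^{\lambda}\approx 7.3\cdot 10^{3}\le e^{9}\approx 8.1\cdot 10^{3}$ in the example above, so the lemma itself survives, but your intermediate bound $e^{\nu(e^{\lambda}-1)}$ is of order $e^{7000}$).

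The repair is to skip the exponentiation and compare $1-\nu+\nu e^{\lambda}$ with $e^{c\nu\lambda}$ directly, which is what the paper does: set $y(\lambda)=e^{c\nu\lambda}-1+\nu-\nu e^{\lambda}$, so $y(0)=0$ and $y'(\lambda)=\nu e^{\lambda}\bigl(c\,e^{(c\nu-1)\lambda}-1\bigr)$, which is nonnegative exactly when $e^{(1-c\nu)\lambda}\le c$, i.e.\ for $\lambda\le \tfrac{\ln c}{1-c\nu}$ in the regime $c\nu<1$ (and for all $\lambda\ge 0$ when $c\nu\ge 1$). Monotonicity then gives $y(\lambda)\ge 0$ on precisely the interval claimed in Lemma~\ref{lem-small-random-variable}, and this is why that interval has the specific form $[0,\tfrac{\ln c}{1-c\nu}]$ — it is the exact range on which the $\nu$-weighted comparison holds, not a range on which the $\nu$-free inequality $e^{\lambda}-1\le c\lambda$ could hold.
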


\begin{proof}
The Taylor series expansion of the function $e^{\lambda X}$ at any $X$ is $e^{\lambda X} = 1 + \sum_{p=1}^{\infty} \frac{(\lambda X)^p}{p!}$. Therefore
\begin{eqnarray}
\E [e^{\lambda X}]  &=& 1 + \sum_{p=1}^{\infty} \frac{\lambda^p}{p!} \E(X^p) \le 1 + \E(X) \sum_{p=1}^{\infty} \frac{\lambda^p}{p!}  \;\;\;\;\;\;\;\;\; (\text{due to } X^p \le X, \forall p \ge 1) \\
&\le& 1 + \nu \sum_{p=1}^{\infty} \frac{\lambda^p}{p!} = 1 + \nu (e^{\lambda} -1) =  1 -\nu + \nu e^{\lambda}
\end{eqnarray}

Next we consider function $y(\lambda) = e^{c\nu\lambda} - 1 +\nu - \nu e^{\lambda}$. Its derivative is $y' = c\nu e^{c\nu\lambda} - \nu e^{\lambda} = \nu e^{\lambda} ( c e^{(c\nu-1)\lambda} -1)$. 

For the case $c\nu \ge 1$, one can observe that $y' \ge 0$ for all $\lambda \ge 0$. This means $y$ is non-decreasing, and hence $y(\lambda) \ge y(0) = 0$. As a result, $e^{c\nu\lambda} \ge 1 -\nu + \nu e^{\lambda} \ge \E [e^{\lambda X}]$.

Consider the case $c\nu < 1$, it is easy to show that $y'(\lambda) \ge 0$ for all $\lambda \in [0, \frac{\ln c}{1-c\nu}]$. This means $y$ is non-decreasing in the interval $[0, \frac{\ln c}{1-c\nu}]$, and hence $y(\lambda) \ge y(0) = 0$ for all $\lambda \in [0, \frac{\ln c}{1-c\nu}]$. As a result, $e^{c\nu\lambda} \ge 1 -\nu + \nu e^{\lambda} \ge \E [e^{\lambda X}]$, completing the proof.
\end{proof}

% % % % % % % % % %
\begin{corollary}\label{cor-small-random-variable-exp-mix}
Consider  a  random variable $X \in [0,1]$ with mean $\E[X] \le \nu$ for some constant $\nu \in [0,1]$. For all constants $a,b \ge 0, c \ge 1$:
\begin{itemize}
\item $\E e^{\lambda (a X^2 + b X)} \le e^{c(a+b)\nu\lambda}$, for all $\lambda \ge 0$, if $c \nu \ge 1$.
\item $\E e^{\lambda (a X^2 + b X)} \le e^{c(a+b)\nu\lambda}$, for all $\lambda \in [0, \frac{\ln c}{(1-c\nu)(a+b)}]$, if $c \nu < 1$.
\end{itemize}
\end{corollary}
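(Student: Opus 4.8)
The plan is to reduce the corollary directly to Lemma~\ref{lem-small-random-variable} by exploiting the elementary fact that $X^2 \le X$ whenever $X \in [0,1]$. Concretely, for $a,b \ge 0$ and $X \in [0,1]$ one has the pointwise inequality $a X^2 + b X \le a X + b X = (a+b) X$, and since $\lambda \ge 0$ and $t \mapsto e^t$ is increasing, this gives $e^{\lambda(a X^2 + b X)} \le e^{\lambda(a+b) X}$ almost surely. Taking expectations preserves the inequality, so it suffices to bound $\E\, e^{\lambda' X}$ with $\lambda' := \lambda(a+b) \ge 0$, and for that we are exactly in the setting of Lemma~\ref{lem-small-random-variable}.

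First I would dispose of the degenerate case $a + b = 0$: then both sides equal $1$ and there is nothing to prove (the stated interval should be read as $\lambda \ge 0$ in that case). Assuming $a+b>0$, I would apply Lemma~\ref{lem-small-random-variable} to $X$ with the same constants $c \ge 1$ and $\nu$, at the point $\lambda'$. In the case $c\nu \ge 1$ the lemma gives $\E\, e^{\lambda' X} \le e^{c\nu\lambda'} = e^{c(a+b)\nu\lambda}$ for every $\lambda' \ge 0$, hence for every $\lambda \ge 0$, which is the first bullet. In the case $c\nu < 1$ the lemma requires $\lambda' \in [0, \tfrac{\ln c}{1-c\nu}]$; translating back through $\lambda' = \lambda(a+b)$, this is precisely $\lambda \in [0, \tfrac{\ln c}{(1-c\nu)(a+b)}]$, and on that range we again get $\E\, e^{\lambda' X} \le e^{c\nu\lambda'} = e^{c(a+b)\nu\lambda}$, which is the second bullet. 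Chaining with the pointwise domination above completes the argument.

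There is essentially no hard step here; the only things to watch are that the monotonicity step needs $\lambda \ge 0$ (which is part of the hypothesis in both bullets) and that the admissible range of $\lambda$ in the sub-case $c\nu<1$ matches after the rescaling $\lambda \mapsto \lambda(a+b)$ — which it does by construction. So the ``obstacle,'' such as it is, is purely bookkeeping: lining up the interval endpoints and acknowledging the $a+b=0$ corner case.
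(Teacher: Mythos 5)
Your proposal is correct and follows essentially the same route as the paper: dominate $aX^2+bX$ by $(a+b)X$ using $X\in[0,1]$ and monotonicity of the exponential, then apply Lemma~\ref{lem-small-random-variable} at $\lambda(a+b)$, with the interval endpoint matching after rescaling. Your extra care about the $\lambda\ge 0$ requirement and the $a+b=0$ corner case is fine but does not change the argument.
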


\begin{proof}
It is easy to observe that $\E e^{\lambda (a X^2)} \le \E e^{\lambda (a X)}$ due to $X \in [0,1]$. This suggests that $\E e^{\lambda (a X^2 + b X)} \le \E e^{\lambda (a + b) X}$. Applying Lemma~\ref{lem-small-random-variable} will complete the proof.
\end{proof}

\begin{proof}[Proof of Lemma \ref{lem-small-exp-square-large-mean}]
Denote $y_n = x_1 + \cdots + x_n$. Observe that $y_{n} = y_{n-1} + x_{n}$ and
\begin{eqnarray}
\label{lem-binomial-exp-square-large-mean-eq-1}
\E_{y_n} e^{\lambda y_n^2} &=& \E_{y_n} e^{\lambda (y_{n-1}^2 + 2x_n y_{n-1} + x_n^2)}  
= \E_{y_{n-1}} \left[e^{\lambda y_{n-1}^2} \E_{x_{n}} e^{\lambda (2x_n y_{n-1} + x_n^2)} \right] 
\end{eqnarray}
Since $c \nu \ge 1$ and  $x_n$ is independent with $y_{n-1}$, Corollary~\ref{cor-small-random-variable-exp-mix} implies $\E_{x_{n}} e^{\lambda (2x_n y_{n-1} + x_n^2)} \le e^{c\nu\lambda ( 2y_{n-1} +1)}$. Plugging this into (\ref{lem-binomial-exp-square-large-mean-eq-1}) leads to
\begin{eqnarray}
\label{lem-binomial-exp-square-large-mean-eq-1.1}
\E_{y_n} e^{\lambda y_n^2} &\le&  \E_{y_{n-1}} \left[e^{\lambda y_{n-1}^2} e^{c\nu\lambda (2y_{n-1} +1)} \right] 
= e^{c\nu\lambda} \E_{y_{n-1}} \left[e^{\lambda (y_{n-1}^2+ 2c\nu y_{n-1})} \right]
\end{eqnarray}
 
Next we consider $\E_{y_{n-1}} \left[e^{\lambda (y_{n-1}^2+ 2c\nu y_{n-1})} \right]$. Observe that $y_{n-1} = y_{n-2} + x_{n-1}$ and hence

\begin{eqnarray}
\E_{y_{n-1}} \left[e^{\lambda (y_{n-1}^2+ 2c\nu y_{n-1})} \right] &=& \E_{y_{n-1}} e^{\lambda (y_{n-2}^2 + 2x_{n-1} y_{n-2} + x_{n-1}^2 + 2c\nu x_{n-1} + 2c\nu y_{n-2})}  \\
\label{lem-binomial-exp-square-large-mean-eq-2}
&=& \E_{y_{n-2}} \left[e^{\lambda (y_{n-2}^2 + 2c\nu y_{n-2})} \E_{x_{n-1}} e^{\lambda (2x_{n-1} y_{n-2} +  2c\nu x_{n-1}  + x_{n-1}^2)} \right] 
\end{eqnarray}
Since $c \nu \ge 1$ and  $x_{n-1}$ is independent with $y_{n-2}$, Corollary~\ref{cor-small-random-variable-exp-mix} implies $\E_{x_{n-1}} e^{\lambda (2x_{n-1} y_{n-2} +  2c\nu x_{n-1}  + x_{n-1}^2)} \le e^{c\nu\lambda (2y_{n-2} +  2c\nu  + 1)}$. Plugging this into (\ref{lem-binomial-exp-square-large-mean-eq-2}) leads to

\begin{eqnarray}
\E_{y_{n-1}} \left[e^{\lambda (y_{n-1}^2+ 2c\nu y_{n-1})} \right] 
&\le& \E_{y_{n-2}} \left[e^{\lambda (y_{n-2}^2 + 2c\nu y_{n-2})}  e^{c\nu\lambda (2y_{n-2} +  2c\nu  + 1)} \right] \\
&=& e^{c\nu\lambda(2c\nu  + 1)} \E_{y_{n-2}} \left[e^{\lambda (y_{n-2}^2+ 4c\nu y_{n-2})} \right]
\end{eqnarray}
By using the same arguments, we can show that 
\begin{eqnarray}
\E_{y_{n-1}} \left[e^{\lambda (y_{n-1}^2+ 2c\nu y_{n-1})} \right] 
&\le&  e^{c\nu\lambda(2c\nu  + 1)} e^{c\nu\lambda(4c\nu  + 1)} \E_{y_{n-3}} \left[e^{\lambda (y_{n-3}^2+ 6c\nu y_{n-3})} \right] \\
&=&  e^{2c\nu\lambda(3c\nu  + 1)} \E_{y_{n-3}} \left[e^{\lambda (y_{n-3}^2+ 6c\nu y_{n-3})} \right] \\
\nonumber
&...& \\
\label{lem-binomial-exp-square-large-mean-eq-3}
&\le&  e^{c(n-2)\nu\lambda(c(n-1)\nu  + 1)} \E_{y_{1}} \left[e^{\lambda (y_{1}^2+ 2c(n-1)\nu y_{1})} \right]
\end{eqnarray}
Note that $\E_{y_{1}} \left[e^{\lambda (y_{1}^2+ 2c(n-1)\nu y_{1})} \right] = \E_{x_{1}} \left[e^{\lambda (x_{1}^2+ 2c(n-1)\nu x_{1})} \right] \le  e^{c\nu\lambda(1 + 2c(n-1)\nu)}$, according to Corollary~\ref{cor-small-random-variable-exp-mix}. Combining this with (\ref{lem-binomial-exp-square-large-mean-eq-3}), we obtain
\begin{eqnarray}
\label{lem-binomial-exp-square-large-mean-eq-4}
\E_{y_{n-1}} \left[e^{\lambda (y_{n-1}^2+ 2c\nu y_{n-1})} \right] 
&\le&  e^{c(n-2)\nu\lambda(c(n-1)\nu  + 1)} e^{c\nu\lambda(1 + 2c(n-1)\nu)} = e^{c\nu\lambda(1 + cn\nu)(n-1)} 
\end{eqnarray}
By plugging this into (\ref{lem-binomial-exp-square-large-mean-eq-1.1}), we obtain
\begin{eqnarray}
\E_{y_n} e^{\lambda y_n^2} &\le&   e^{c\nu\lambda} e^{c\nu\lambda(1 + cn\nu)(n-1)} = e^{c\nu\lambda((1 + cn\nu)n - cn\nu)} \\
 &\le& e^{cn\nu(1 + c n\nu)\lambda}
\end{eqnarray}
completing the proof.
\end{proof}

\begin{proof}[Proof of Lemma \ref{lem-small-exp-square}]
Denote $y_n = x_1 + \cdots + x_n$ and observe that
\begin{eqnarray}
\label{lem-binomial-exp-square-eq-1}
\E_{y_n} e^{\lambda y_n^2} &=& \E_{y_n} e^{\lambda (y_{n-1}^2 + 2x_n y_{n-1} + x_n^2)}  
= \E_{y_{n-1}} \left[e^{\lambda y_{n-1}^2} \E_{x_{n}} e^{\lambda (2x_n y_{n-1} + x_n^2)} \right] 
\end{eqnarray}
Note that $y_{n-1} = x_1 + \cdots + x_{n-1} \le n-1$ and $\lambda (2y_{n-1} +1) \le \lambda(2n-1) \le \lambda(4n-3) \le \frac{\ln c}{1-c\nu}$. Since   $x_{n}$ is independent with $y_{n-1}$, Corollary~\ref{cor-small-random-variable-exp-mix} implies $\E_{x_{n}} e^{\lambda (2x_n y_{n-1} + x_n^2)} \le e^{c\nu\lambda ( 2y_{n-1} +1)}$. Plugging this into (\ref{lem-binomial-exp-square-eq-1}) leads to
\begin{eqnarray}
\label{lem-binomial-exp-square-eq-1.1}
\E_{y_n} e^{\lambda y_n^2} &\le&  \E_{y_{n-1}} \left[e^{\lambda y_{n-1}^2} e^{c\nu\lambda (2y_{n-1} +1)} \right] 
= e^{c\nu\lambda} \E_{y_{n-1}} \left[e^{\lambda (y_{n-1}^2+ 2c\nu y_{n-1})} \right]
\end{eqnarray}
 
Next we consider $\E_{y_{n-1}} \left[e^{\lambda (y_{n-1}^2+ 2c\nu y_{n-1})} \right]$. Observe that

\begin{eqnarray}
\E_{y_{n-1}} \left[e^{\lambda (y_{n-1}^2+ 2c\nu y_{n-1})} \right] &=& \E_{y_{n-1}} e^{\lambda (y_{n-2}^2 + 2x_{n-1} y_{n-2} + x_{n-1}^2 + 2c\nu x_{n-1} + 2c\nu y_{n-2})}  \\
\label{lem-binomial-exp-square-eq-2}
&=& \E_{y_{n-2}} \left[e^{\lambda (y_{n-2}^2 + 2c\nu y_{n-2})} \E_{x_{n-1}} e^{\lambda (2x_{n-1} y_{n-2} +  2c\nu x_{n-1}  + x_{n-1}^2)} \right] 
\end{eqnarray}
One can easily show that $\lambda (2 y_{n-2} +  2c\nu +1) \le \lambda (2 ({n-2} ) +  2c\nu +1) \le \lambda(4n-3) \le \frac{\ln c}{1-c\nu}$, since  $y_{n-2} = x_1 + \cdots + x_{n-2} \le n-2$. Therefore Corollary~\ref{cor-small-random-variable-exp-mix} implies $\E_{x_{n-1}} e^{\lambda (2x_{n-1} y_{n-2} +  2c\nu x_{n-1}  + x_{n-1}^2)} \le e^{c\nu\lambda (2y_{n-2} +  2c\nu  + 1)}$, since $x_{n-1}$ is independent with $y_{n-2}$. Plugging this into (\ref{lem-binomial-exp-square-eq-2}) leads to

\begin{eqnarray}
\E_{y_{n-1}} \left[e^{\lambda (y_{n-1}^2+ 2c\nu y_{n-1})} \right] 
&\le& \E_{y_{n-2}} \left[e^{\lambda (y_{n-2}^2 + 2c\nu y_{n-2})}  e^{c\nu\lambda (2y_{n-2} +  2c\nu  + 1)} \right] \\
&=& e^{c\nu\lambda(2c\nu  + 1)} \E_{y_{n-2}} \left[e^{\lambda (y_{n-2}^2+ 4c\nu y_{n-2})} \right]
\end{eqnarray}
By using the same arguments, we can show that 
\begin{eqnarray}
\E_{y_{n-1}} \left[e^{\lambda (y_{n-1}^2+ 2c\nu y_{n-1})} \right] 
&\le&  e^{c\nu\lambda(2c\nu  + 1)} e^{c\nu\lambda(4c\nu  + 1)} \E_{y_{n-3}} \left[e^{\lambda (y_{n-3}^2+ 6c\nu y_{n-3})} \right] \\
&=&  e^{2c\nu\lambda(3c\nu  + 1)} \E_{y_{n-3}} \left[e^{\lambda (y_{n-3}^2+ 6c\nu y_{n-3})} \right] \\
\nonumber
&...& \\
\label{lem-binomial-exp-square-eq-3}
&\le&  e^{c(n-2)\nu\lambda(c(n-1)\nu  + 1)} \E_{y_{1}} \left[e^{\lambda (y_{1}^2+ 2c(n-1)\nu y_{1})} \right]
\end{eqnarray}
Note that $\E_{y_{1}} \left[e^{\lambda (y_{1}^2+ 2c(n-1)\nu y_{1})} \right] = \E_{x_{1}} \left[e^{\lambda (x_{1}^2+ 2c(n-1)\nu x_{1})} \right] \le  e^{c\nu\lambda(1 + 2c(n-1)\nu)}$, according to Corollary~\ref{cor-small-random-variable-exp-mix} and the fact that $\lambda(1 + 2c(n-1)\nu) \le \lambda(4n-3) \le \frac{\ln c}{1-c\nu}$. Combining this with (\ref{lem-binomial-exp-square-eq-3}), we obtain
\begin{eqnarray}
\label{lem-binomial-exp-square-eq-4}
\E_{y_{n-1}} \left[e^{\lambda (y_{n-1}^2+ 2c\nu y_{n-1})} \right] 
&\le&  e^{c(n-2)\nu\lambda(c(n-1)\nu  + 1)} e^{c\nu\lambda(1 + 2c(n-1)\nu)} = e^{c\nu\lambda(1 + cn\nu)(n-1)} 
\end{eqnarray}
By plugging this into (\ref{lem-binomial-exp-square-eq-1.1}), we obtain
\begin{eqnarray}
\E_{y_n} e^{\lambda y_n^2} &\le&   e^{c\nu\lambda} e^{c\nu\lambda(1 + cn\nu)(n-1)} = e^{c\nu\lambda((1 + cn\nu)n - cn\nu)} \\
 &\le& e^{cn\nu(1 + c n\nu)\lambda}
\end{eqnarray}
completing the proof.
\end{proof}

\subsection{Binomial and multinomial random variables} \label{app-Binomial-multinomial}

Next we analyze some properties of  binomial  random variables.

\begin{lemma}\label{lem-binomial-exp-square-large-mean}
Consider a binomial  random variable $z$ with parameters $n \ge 1$ and  $\nu \in [0,1]$. For any $c \ge 1$ satisfying $c \nu \ge 1$ and any  $\lambda \ge 0$, we have $\E e^{\lambda z^2} \le e^{c n\nu(1 + c n\nu)\lambda}$.
\end{lemma}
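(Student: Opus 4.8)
The plan is to recognize this as an immediate consequence of Lemma~\ref{lem-small-exp-square-large-mean}. A binomial random variable $z$ with parameters $n$ and $\nu$ has the same distribution as $x_1 + \cdots + x_n$, where $x_1, \ldots, x_n$ are independent Bernoulli random variables with success probability $\nu$. Each such $x_i$ is supported on $\{0,1\} \subseteq [0,1]$ and has mean $\E[x_i] = \nu$, so in particular $\E[x_i] \le \nu$.

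First I would make this coupling explicit, so that $\E e^{\lambda z^2} = \E \exp\bigl(\lambda (x_1 + \cdots + x_n)^2\bigr)$. Next, since $c \ge 1$, $c\nu \ge 1$ and $\lambda \ge 0$ are exactly the hypotheses under which Lemma~\ref{lem-small-exp-square-large-mean} applies to the variables $x_1, \ldots, x_n$, I would invoke that lemma to obtain $\E \exp\bigl(\lambda (x_1 + \cdots + x_n)^2\bigr) \le \exp\bigl(\lambda c n\nu(1 + c n\nu)\bigr)$, which is precisely the claimed bound.

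There is essentially no obstacle here beyond Lemma~\ref{lem-small-exp-square-large-mean} itself: the entire technical content --- the term-by-term peeling of the square $(x_1 + \cdots + x_n)^2 = y_{n-1}^2 + 2 x_n y_{n-1} + x_n^2$ and the repeated use of Corollary~\ref{cor-small-random-variable-exp-mix} to absorb the cross terms --- is carried out there. The only points worth double-checking in this corollary are that any Bernoulli decomposition of a binomial variable is legitimate (it is, by definition of the binomial distribution) and that Lemma~\ref{lem-small-exp-square-large-mean} was stated with the one-sided condition $\E[x_i] \le \nu$ rather than an equality, so that the case $\E[x_i] = \nu$ here comfortably qualifies. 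I would therefore keep the proof to these two lines; if a self-contained argument were preferred, the induction in Lemma~\ref{lem-small-exp-square-large-mean} could be replayed directly using $x_i^2 = x_i$ for Bernoulli summands, but this gains nothing.
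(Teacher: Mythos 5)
Your proposal is correct and matches the paper's own proof exactly: the paper also writes the binomial variable as a sum of i.i.d.\ Bernoulli$(\nu)$ variables, notes they lie in $[0,1]$ with mean at most $\nu$, and applies Lemma~\ref{lem-small-exp-square-large-mean} under the hypotheses $c \ge 1$, $c\nu \ge 1$, $\lambda \ge 0$. Nothing further is needed.
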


\begin{proof}
Since $z$ is a binomial  random variable, we can write $z = x_1 + \cdots + x_n$, where $x_1, ..., x_n$ are i.i.d. Bernoulli random variables with parameter $\nu$. Therefore applying Lemma \ref{lem-small-exp-square-large-mean} completes the proof.
\end{proof}

\begin{lemma}\label{lem-binomial-exp-square}
Consider a binomial  random variable $z$ with parameters $n \ge 1$ and $\nu \in [0,1]$. For any $c \ge 1$ satisfying $c \nu < 1$ and any $\lambda \in [0, \frac{\ln c}{(1-c\nu)(4n-3)}]$, we have $\E e^{\lambda z^2} \le e^{c n\nu(1 + c n\nu)\lambda}$.
%$\E e^{\lambda z^2} \le e^{2n\nu(1 + 2 n\nu)\lambda}$ for any binomial  random variable $z$ with parameters $n \ge 2$ and  $\nu \in [0, 0.5)$, and any $\lambda \in [0, \frac{\ln2}{(1-2\nu)(4n-3)}]$.
\end{lemma}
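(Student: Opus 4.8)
The plan is to reduce this statement to Lemma~\ref{lem-small-exp-square} by representing the binomial variable as a sum of independent Bernoulli summands, in exact parallel with the proof of Lemma~\ref{lem-binomial-exp-square-large-mean}. Since $z$ has parameters $n$ and $\nu$, I would write $z = x_1 + \cdots + x_n$ where $x_1, \dots, x_n$ are i.i.d.\ Bernoulli random variables with parameter $\nu$. Each $x_i$ then takes values in $[0,1]$ (indeed in $\{0,1\}$) and satisfies $\E[x_i] = \nu$, so the hypotheses ``$x_i \in [0,1]$'' and ``$\E[x_i] \le \nu$'' of Lemma~\ref{lem-small-exp-square} hold, the latter with equality.

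Next I would check that the remaining hypotheses match verbatim: the condition $c \ge 1$ with $c\nu < 1$ is exactly the regime of Lemma~\ref{lem-small-exp-square}, and the admissible range $\lambda \in [0, \frac{\ln c}{(1-c\nu)(4n-3)}]$ stated here is precisely the one required there. Hence Lemma~\ref{lem-small-exp-square} applies directly and yields $\E \exp\!\big(\lambda (x_1 + \cdots + x_n)^2\big) \le \exp\!\big(\lambda c n\nu(1 + cn\nu)\big)$, which is exactly $\E e^{\lambda z^2} \le e^{cn\nu(1+cn\nu)\lambda}$ after substituting $z = x_1 + \cdots + x_n$. This completes the proof.

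There is essentially no obstacle at this level; the work has been pushed one layer down into Lemma~\ref{lem-small-exp-square}. That is where the genuine difficulty lies: there one peels the summands $x_n, x_{n-1}, \dots, x_1$ off $y_n^2 = (x_1 + \cdots + x_n)^2$ one at a time, at each step bounding a cross term of the form $\E_{x_k}\exp\!\big(\lambda(2 x_k y_{k-1} + 2c\nu x_k + x_k^2)\big)$ via Corollary~\ref{cor-small-random-variable-exp-mix}, and the constraint $\lambda(4n-3) \le \frac{\ln c}{1-c\nu}$ is precisely what keeps the argument $\lambda(2y_{k-1} + 2c\nu + 1)$ inside the permissible interval of that corollary at every step of the induction. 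For the present lemma, however, none of that needs to be revisited: the reduction to i.i.d.\ Bernoulli summands is all that is required.
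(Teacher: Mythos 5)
Your proof is correct and matches the paper's own argument exactly: the paper also writes $z = x_1 + \cdots + x_n$ as a sum of i.i.d.\ Bernoulli variables with parameter $\nu$ and invokes Lemma~\ref{lem-small-exp-square}, whose hypotheses and admissible $\lambda$-range coincide verbatim with those stated here. Nothing further is needed.
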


\begin{proof}
Since $z$ is a binomial  random variable, we can write $z = x_1 + \cdots + x_n$, where $x_1, ..., x_n$ are i.i.d. Bernoulli random variables with parameter $\nu$. Therefore applying Lemma \ref{lem-small-exp-square} completes the proof.
\end{proof}

\begin{lemma}[Multinomial variable]\label{lem-multinomial-square}
Consider a multinomial random variable $(n_1, ..., n_K)$ with parameters $n$ and $(p_1, ..., p_K)$. For any $\delta >0$: 
\[ \Pr\left( \sum_{i=1}^K p_i^2  > \sum_{i=1}^K \left( \frac{n_i}{n} \right)^2 + 2 \sqrt{\frac{2}{n}\ln\frac{K}{\delta}} \right) < \delta \]
\end{lemma}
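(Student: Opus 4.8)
The plan is to reduce the statement to a union bound over $K$ one-sided Hoeffding tail bounds. First I would recall that in a multinomial$(n;p_1,\dots,p_K)$ vector each coordinate $n_i$ has marginal distribution Binomial$(n,p_i)$, so $n_i=\sum_{j=1}^n Y_{ij}$ with $Y_{ij}\in\{0,1\}$ i.i.d.\ Bernoulli$(p_i)$. Hoeffding's inequality applied to this sum (each summand lying in an interval of length $1$) gives, for every $t>0$,
\[
\Pr\!\left(p_i-\tfrac{n_i}{n}\ge t\right)\le\exp(-2nt^2).
\]
Choosing $t=\sqrt{\tfrac{1}{2n}\ln\tfrac{K}{\delta}}$ makes the right-hand side equal to $\delta/K$, and a union bound over $i\in[K]$ shows that with probability at least $1-\delta$ the event $\mathcal{E}:=\{\,p_i\le n_i/n+t\ \text{for all }i\in[K]\,\}$ holds.

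Next I would carry out the elementary algebra on $\mathcal{E}$. Writing $\hat p_i=n_i/n$, we have $\sum_i p_i=\sum_i\hat p_i=1$, and
\[
\sum_{i=1}^K p_i^2-\sum_{i=1}^K\hat p_i^2=\sum_{i=1}^K(p_i-\hat p_i)(p_i+\hat p_i).
\]
For indices with $p_i\le\hat p_i$ the corresponding term is nonpositive and may be dropped; for indices with $p_i>\hat p_i$ we have $0<p_i-\hat p_i\le t$ on $\mathcal{E}$, so that term is at most $t(p_i+\hat p_i)$. Hence on $\mathcal{E}$,
\[
\sum_{i=1}^K p_i^2-\sum_{i=1}^K\hat p_i^2\;\le\;t\sum_{i=1}^K(p_i+\hat p_i)\;=\;2t\;=\;\sqrt{\tfrac{2}{n}\ln\tfrac{K}{\delta}}\;\le\;2\sqrt{\tfrac{2}{n}\ln\tfrac{K}{\delta}}.
\]
Thus the complement of the event appearing in the lemma contains $\mathcal{E}$, which has probability at least $1-\delta$, giving the claimed bound; the degenerate cases $\delta\ge K$ (where the logarithm is nonpositive) are trivial since the probability in question is at most $1$.

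The argument has no genuinely hard step; the only point requiring care is to use the \emph{one-sided} tail and to retain (rather than bound in absolute value) the negative terms in the telescoping difference, so that the $K$-fold union bound costs $\ln(K/\delta)$ rather than $\ln(2K/\delta)$, matching the constant stated in the lemma. A secondary point worth stating cleanly is the reduction of a multinomial marginal to a binomial, which is what licenses applying Hoeffding's inequality coordinatewise. The detailed version will simply fill in these steps.
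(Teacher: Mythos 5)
Your proposal is correct and follows essentially the same route as the paper's proof: factor $\sum_i p_i^2-\sum_i(n_i/n)^2$ as $\sum_i(p_i+n_i/n)(p_i-n_i/n)$, bound it by twice the uniform coordinatewise deviation, and control that deviation by binomial concentration on each marginal plus a union bound over the $K$ coordinates. The only difference is cosmetic: you invoke plain Hoeffding (making the argument self-contained and even yielding the sharper deviation $\sqrt{\tfrac{2}{n}\ln\tfrac{K}{\delta}}$ without the factor $2$), whereas the paper cites an external binomial tail lemma of \citet{kawaguchi22RobustGen}.
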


\begin{proof}
Observe that
\begin{eqnarray}
 \sum_{i=1}^K p_i^2 -  \sum_{i=1}^K \left( \frac{n_i}{n} \right)^2 &=&  \sum_{i=1}^K \left[p_i^2 - \left( \frac{n_i}{n} \right)^2 \right] \\
 &=&  \sum_{i=1}^K \left[p_i + \frac{n_i}{n} \right] \left[p_i - \frac{n_i}{n} \right] \\ 
 &=&  2 \sum_{i=1}^K \left(0.5 p_i + \frac{0.5n_i}{n} \right) \left(p_i - \frac{n_i}{n} \right) \\ 
 \label{lem-multinomial-square-eq-1}
 &\le& 2 \max_{i \in [K]} \left(p_i - \frac{n_i}{n} \right)
 \end{eqnarray}
where the last inequlality can be derived by using the fact that $\sum_{i=1}^K \left(0.5 p_i + \frac{0.5n_i}{n} \right) \left(p_i - \frac{n_i}{n} \right)$ is a convex combination of the elements in $\{p_i - \frac{n_i}{n}: i \in [K]\}$, because of $1= \sum_{i=1}^K \left(0.5 p_i + \frac{0.5n_i}{n} \right) $. Furthermore, since $n_i$ is a binomial random variable with parameters $n$ and $p_i$, Lemma~5 in \citep{kawaguchi22RobustGen} shows that $\Pr\left(p_i - \frac{n_i}{n} > \sqrt{\frac{2 p_i}{n}\ln\frac{K}{\delta}} \right) < \delta$ for all $i$. This immediately implies $\Pr\left(p_i - \frac{n_i}{n} > \sqrt{\frac{2}{n}\ln\frac{K}{\delta}} \right) < \delta$. Combining this fact with (\ref{lem-multinomial-square-eq-1}), we obtain
%\begin{eqnarray}
$\Pr\left( \sum_{i=1}^K p_i^2 -  \sum_{i=1}^K \left( \frac{n_i}{n} \right)^2 > 2 \sqrt{\frac{2}{n}\ln\frac{K}{\delta}} \right) < \delta
$, % \end{eqnarray}
completing the proof.
\end{proof}

\newpage
\section{Experimental setup} \label{app-sec-Experimental-setup}

More details about preprocessing and partition:
\begin{itemize}
\item We first preprocessed the images following Pytorch\footnote{\url{ https://pytorch.org/vision/0.20/models/generated/torchvision.models.vit_b_16.html}}: The images are resized to $resize\_size=[256]$ using interpolation=InterpolationMode.BILINEAR, followed by a central crop of $crop\_size=[224]$. Finally the values are first rescaled to $[0.0, 1.0]$. Those operations are required for Pytorch pretrained models.
\item For each run, we randomly choose 200 points in $[0.0,1.0]^{C \times H \times W}$ to be the centroids, since each preprocessed image belongs to $[0.0,1.0]^{C \times H \times W}$. Those centroids are used to build the small areas $\mathcal{Z}_i$ in the partition. Each training image $x$ will be assigned to area $\mathcal{Z}_i$ if it is closest to the centroid of $\mathcal{Z}_i$ amongst all centroids, according to the Euclidean distance.
\end{itemize}

\section{Additional experiment results} \label{app-Additional-experiment-results}

\subsection{Impact of the data-partition alignment}\label{app-data-partition-alignment}

Our bounds can be tighter for a better alignment between the partition and data gemometry. However, analyzing the effect of data geometry and partitioning strategies is  challenging, particularly in high-dimensional settings with unknown distributions. To address this, we designed two controlled ablations  using a synthetic model. 

\textbf{Mixture model (MM):} \textit{each sample $(x, y)$ is generated by
\begin{itemize}
    \item Randomly pick an index $z \sim Cat(\theta)$, a categorical distribution with parameter $\theta = (1/K, \ldots, 1/K) \in \mathbb{R}^K$
    \item Generate $x \sim \mathcal{N}(\mu_z, \nu)$, a normal distribution with mean $\mu_z = (0, \pi * z) \in \mathbb{R}^2$ and variance $\nu$
    \item Return class label $y = 1$ if $z$ is odd, and $y = 0$ otherwise.
\end{itemize}}

%The uncertainty term $Unc(\Gamma) = C \sqrt{\hat{u}\alpha \ln \gamma} + g_2$ is computed from a sample set $S$, with $K = 100$, $\gamma = 0.04^{-1/\alpha}$ and $\delta = 0.01$ while varying $\alpha$. This term is the main part in Bound \ref{thm-gen-train-small-K-any-distribution-eq} that reflects the role of $\Gamma$.

\textbf{Exploring different partitioning strategies:} We considered three types of partitions $\Gamma$: 
\textit{\begin{itemize}
    \item \textbf{T1:} A uniform grid partition that divides the data space into equally sized regions. However, this strategy may not align with the actual data distribution, potentially resulting in regions with highly imbalanced probability measures.
    \item \textbf{T2:} A partition formed by uniformly generating $K$ centroids to define the regions. Like T1, this method may not capture the underlying structure of the data.
    \item \textbf{T3:} A partition where the centroids $\mu_1, \ldots, \mu_K$ of the mixture components are fixed as region centers. This approach tends to yield more balanced regions, where $P(\mathcal{Z}_i) \approx P(\mathcal{Z}_j)$ for all $i, j$, for small variances.
\end{itemize}}

Figure \ref{fig-Alignment} visualizes the mixture model (with $\nu=1$) and those three partitions. This figure demonstrates that \textbf{T3} seems to best align with the data geometry, while the other two partitions can be much worse. Figure \ref{fig-Alignment-geometry} illustrates how the data geometry can be changed significantly when varying the variance in the mixture model. It is easy to see that the data-partition alignment quality can be entirely different even for the same partition.

\begin{figure}[tp]
    \centering
    \includegraphics[width=0.57\linewidth]{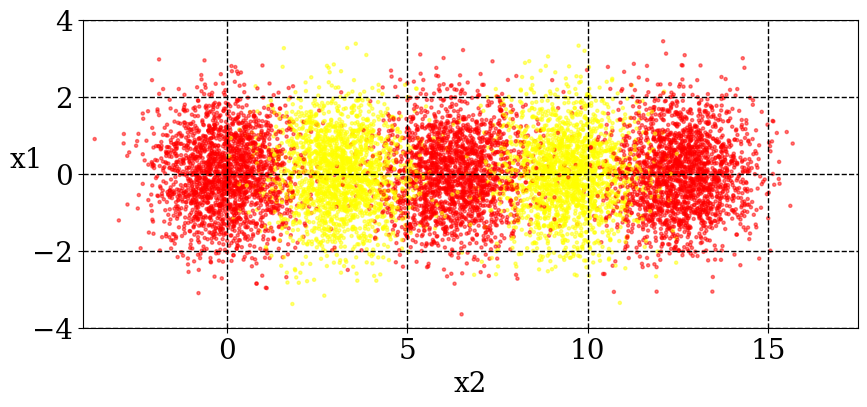} \\
    (a) Partition \textbf{T1}
    
    \includegraphics[width=0.6\linewidth]{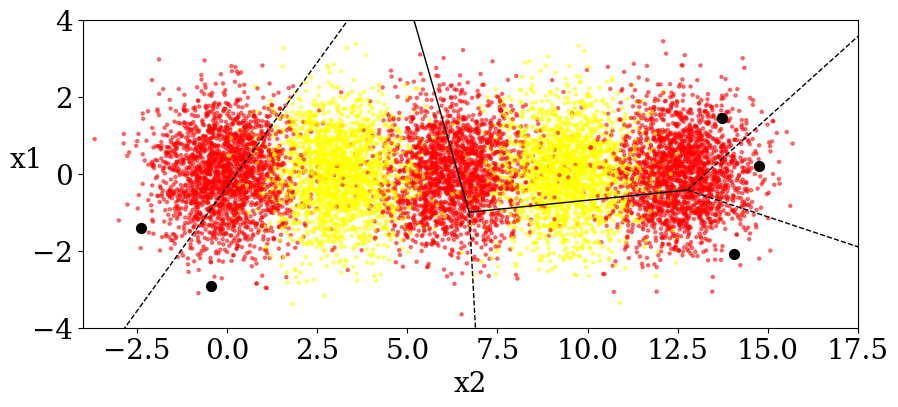} \\
    {(b) Partition \textbf{T2}}
    
    \includegraphics[width=0.6\linewidth]{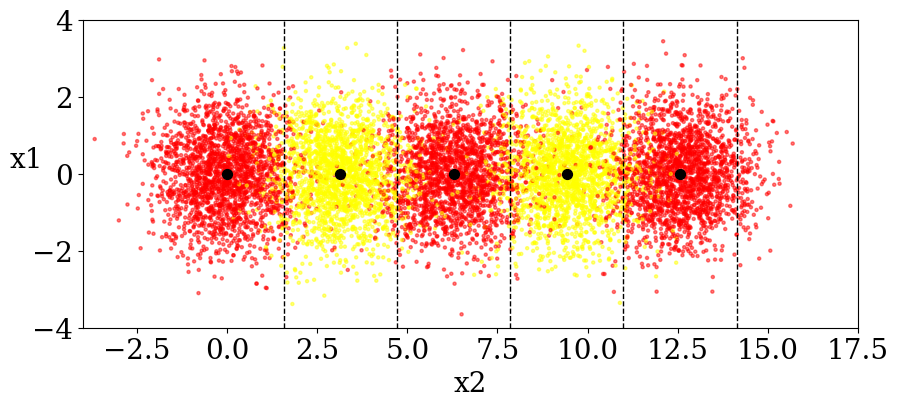} \\
    (c) Partition \textbf{T3}
    \caption{Alignment between data distribution and partition.}
    \label{fig-Alignment}
\end{figure}

\begin{figure}[tp]
    \centering
    \includegraphics[width=0.45\linewidth]{image/T3-partition.png}
    \includegraphics[width=0.45\linewidth]{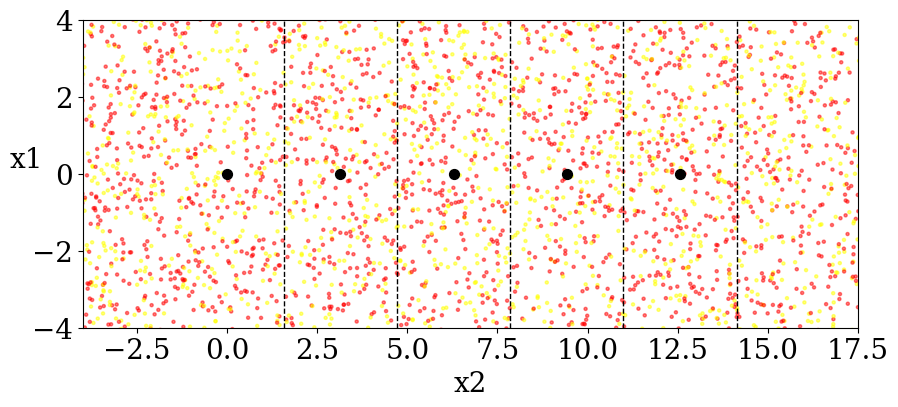} 
    \caption{The mixture model for the cases:  $\nu =1$ (left), and  $\nu =10^4$ (right).}
    \label{fig-Alignment-geometry}
\end{figure}

To evaluate the quality of these partitions, we generated $100000$ i.i.d. samples from the MM (with variance $\nu=1$) and computed $Unc(\Gamma)$ for varying $\alpha$, with $K = 100$, $\gamma = 0.04^{-1/\alpha}$ and $\delta = 0.01$. The results are reported in Table~\ref{tab:different-partitioning-strategies-geometries}(a). Among the three strategies, \textbf{T1} resulted in the highest uncertainty, while \textbf{T3} consistently produced the lowest. These findings suggest that partitions leading to balanced local measures (such as \textbf{T3}) are more favorable, while those poorly aligned with the data distribution (\textbf{T1}, \textbf{T2}) lead to higher uncertainty. This empirical evidence supports our theoretical discussion on the importance of selecting meaningful partitions.

\begin{table}[tp]
\centering
\caption{Uncertainty term $Unc(\Gamma)$ under (a) \textit{different partitioning strategies} (for fixed variance $\nu=1$) and (b) \textit{data geometries} (for fixed partition \textbf{T3}), as $\alpha$ changes. Smaller is better.}
\label{tab:different-partitioning-strategies-geometries}
%\vspace{-5pt}
\begin{minipage}{0.4\textwidth}
\centering
(a) \\
\small{
\begin{tabular}{|lcccc|}
\hline 
$\alpha$   & 4      & 6      & 8      & 10     \\ \hline
\textbf{T1} & 1.0045 & 0.8106 & 0.7315 & 0.6889 \\
\textbf{T2} & 0.8723 & 0.7302 & 0.6722 & 0.6409 \\
\textbf{T3} & 0.8450 & 0.7147 & 0.6615 & 0.6328 \\
\hline
\end{tabular}}
\end{minipage}%
\hfill
\begin{minipage}{0.50\textwidth}
\centering
(b) \\
\small{
\begin{tabular}{|lcccc|}
\hline 
$\alpha$ & 4      & 6      & 8      & 10     \\ \hline
$\nu=10^0$ & 0.8450 & 0.7147 & 0.6615 & 0.6328 \\
$\nu=10^2$ & 0.8458 & 0.7151 & 0.6618 & 0.6331 \\
$\nu=10^4$ & 1.0142 & 0.8393 & 0.7680 & 0.7295 \\
\hline
\end{tabular}}
\end{minipage}
\end{table}

\textbf{Exploring data geometries:} We further examine how the geometry of the data distribution influences the uncertainty term. To this end, we consider the same mixture model with varying variances $\nu \in \{10^{0}, 10^{2}, 10^{4}\}$ while fixing \textbf{T3} as the partition.  Note that increasing $\nu$ to $10^{4}$ significantly alters the geometry of the mixture model compared to the case $\nu = 1$. The corresponding uncertainty values are reported in Table~\ref{tab:different-partitioning-strategies-geometries}(b).
These results demonstrate that $Unc(\Gamma)$ can vary considerably depending on the geometry induced by the data distribution. When the partition $\Gamma$ does not align well with the data, the resulting local regions may have highly imbalanced probability measures. In such cases, the uncertainty can be large.

\subsection{Comparison with existing generalization bounds} \label{app-Comparation-prior-bounds}

To further clarify the advantages of our bound, we  carry out an additional comparison with robustness-based bounds developed by \citep{kawaguchi22RobustGen, than2025gentle}, which are also model-dependent. In this comparison, we apply our bound under the mild setting used in Table~\ref{tab:sup-imagenet-bound-train-only}, while we use $\delta = 0.05$ (corresponding to 95\% confidence) and utilize the ImageNet validation set to approximate the intractable components for the bounds in \citep{kawaguchi22RobustGen, than2025gentle}. 

The results across 17 pretrained models are summarized in Table~\ref{tab:compare-bounds}. The results suggest that our bound outperforms the existing robustness-based bounds in most cases, despite not relying on the validation set. This highlights the practical advantages and potential of our bound. 

\begin{table}[htp]
\centering
\caption{Comparison with different model-dependent bounds for pretrained models on ImageNet. The prior bounds are approximated from the validation set, due to their intractability.}
\label{tab:compare-bounds}
\small{
\begin{tabular}{|lcccc|}
\hline 
Model                & Test error & Bound (3) in  & Bound (8) in  & Our bound (\ref{thm-gen-train-small-K-any-distribution-eq})\\ 
 & & \citep{kawaguchi22RobustGen} & \citep{than2025gentle} & \\ \hline
ResNet18 V1          & 0.302      & 1.501            & 0.599                & 0.579         \\
ResNet34 V1          & 0.267      & 1.437            & 0.553                & 0.523         \\
ResNet50 V1          & 0.239      & 1.406            & 0.521                & 0.498         \\
ResNet101 V1         & 0.226      & 1.377            & 0.504                & 0.472         \\
ResNet152 V1         & 0.217      & 1.371            & 0.491                & 0.468         \\ 
SwinTransformer B    & 0.164      & 1.323            & 0.432                & 0.431         \\
SwinTransformer T    & 0.185      & 1.365            & 0.463                & 0.430         \\
SwinTransformer B V2 & 0.159      & 1.322            & 0.421                & 0.466         \\
SwinTransformer T V2 & 0.179      & 1.349            & 0.448                & 0.454         \\ 
VGG13                & 0.301      & 1.475            & 0.600                & 0.551         \\
VGG13 BN             & 0.284      & 1.478            & 0.580                & 0.559         \\
VGG19                & 0.276      & 1.444            & 0.565                & 0.528         \\
VGG19 BN             & 0.258      & 1.439            & 0.545                & 0.526         \\ 
DenseNet121          & 0.256      & 1.432            & 0.527                & 0.523         \\
DenseNet161          & 0.229      & 1.375            & 0.493                & 0.471         \\
DenseNet169          & 0.244      & 1.398            & 0.513                & 0.490         \\
DenseNet201          & 0.231      & 1.369            & 0.498                & 0.465         \\
\hline
\end{tabular}}
\end{table}

\subsection{Correlation between test error and our bounds}\label{app-Correlation-test-error-bounds}

We investigate how well our bounds can correlate with test error. Our bounds contain two main parts: (1) Training error and (2) Uncertainty term $Unc(\Gamma)$. Due to being simplified from Bound~(\ref{thm-gen-train-small-K-eq}), Bound~(\ref{thm-gen-train-small-K-any-distribution-eq})  may not exhibit the full strength of our bounds in this work. Therefore, we take Bound~(\ref{thm-gen-train-small-K-eq}) into consideration in this evaluation.

To examine the impact of the uncertainty term $Unc(\Gamma) = C\sqrt{\frac{u}{2n^2} \ln\frac{1}{\delta_1} } + g(\Gamma,\vh,\delta_2)$ in Bound~(\ref{thm-gen-train-small-K-eq}), we compute it from either the ImageNet training or validation set, using the mild setting for the paramteters.
The results are reported in Table~\ref{tab:correlation-to-test-error}.

The results suggest that the uncertainty term in (\ref{thm-gen-train-small-K-eq})  captures meaningful characteristics of the trained models and correlate strongly with the test error. It also contributes a great role to the bounds, since Bound (\ref{thm-gen-train-small-K-eq}) exhibits a near-perfect correlation to test error. These highlight the practical relevance of our bounds for performance estimation.

\begin{table}[tp]
\centering
\caption{Correlations to test error. $Unc(\Gamma)$ in Bound (\ref{thm-gen-train-small-K-eq}) is approximated from either the ImageNet \textit{training set} or  \textit{validation set}. Bound (\ref{thm-gen-train-small-K-any-distribution-eq}) is computed from the training set alone.}
\label{tab:correlation-to-test-error}
\begin{tabular}{|lc|}
\hline 
\textbf{Quantity}                  & \textbf{Correlation to test error} \\ \hline
Training error               & 0.7899                    \\
$Unc(\Gamma)$ (Train) & 0.7926                    \\
$Unc(\Gamma)$ (Valid) & 0.9918   \\
\hline
Bound (\ref{thm-gen-train-small-K-any-distribution-eq}) & 0.7899                    \\
Bound (\ref{thm-gen-train-small-K-eq}) & 0.9893   \\ \hline 
\end{tabular}
\end{table}

\subsection{Better understanding of generalization}\label{app-understanding-generalization}

We next investigate how can theoretical bounds reflect the performance of a trained model and how predictive are the bounds? This is important when one wants to understand the main factors that lead to better performance/generalization of a model. It is also important to compare two specific/trained models of interest. 

To this end, we take Bound (\ref{thm-gen-train-small-K-eq}) into consideration. Specifically, we focus on the following quantities for each model $h$ in the bound:
\begin{eqnarray}
    \mathrm{Align}(h) &=& \sum_{i \in T} a_i(h)\sqrt{n_i/n} \\
    \mathrm{Fair}(h) &=& \sum_{i \in T} a_i(h) \\
    \mathrm{Behavior}(h) &=& \mathrm{Align}(h)\cdot \sqrt{\frac{2 \ln(2K/\delta_2)}{n}}
    + \mathrm{Fair}(h)\cdot \frac{2\ln(2K/\delta_2)}{n}
\end{eqnarray}

Note that $Align(h)$ tells how well the model's local error can match with the data distribution. A better model should align better with the distribution's complexity, hence making $Align$ smaller. Meanwhile $Fair(h)$ tells the macro-level error of model $h$. It also suggests how fair for different local areas the model is. Finally, $Behavior(h)$ is the combined behavior, being an important part of Bound (\ref{thm-gen-train-small-K-eq}).

We use $K=200, \delta_2=0.01$ and the ImageNet validation set to compute those quantities. The results for 6 pretrained models are reported in Table~\ref{tab:understanding-generalization-Align-Fair}. We can observe that all of those quantities have extremely high correlations to the test error. $Align$ has the highest correlation, but $Fair$ has the lowest one. These results demonstrate that $Align$ can exhibit the quality of a model, and can be an accurate indicator for comparison between two models.

When visualizing the two lists $\{a_i: {i \in T}\}$ and $\{\sqrt{n_i/n}: {i \in T}\}$ in Figure~\ref{fig:local-error-to-distribution-complexity}, we observe a good correlation for some models, e.g., SwinTransformer B V2. Meanwhile ResNet18 V1 exhibited a much worse correlation. For areas with high probability mass (meaning large $\sqrt{n_i/n}$), those models often have small errors. However, those models have large errors on areas with very low probability density. The behavior in those areas (with probability mass $<0.05$) seems quite noisy. 

We further visualize those quantities $a_i$ and $\sqrt{n_i/n}$ for each local region in Figure~\ref{fig:local-error-and-distribution-complexity}. This visualization provides more details about the local behavior of a model, and supports further comparison between two models. For instance, while having comparable test error, ResNet152 V2 seems to slightly worse align with the distribution's complexity than VIT, specially for areas with low probability density. 

\begin{figure}[tp]
    \centering 
    \includegraphics[width=0.75\textwidth]{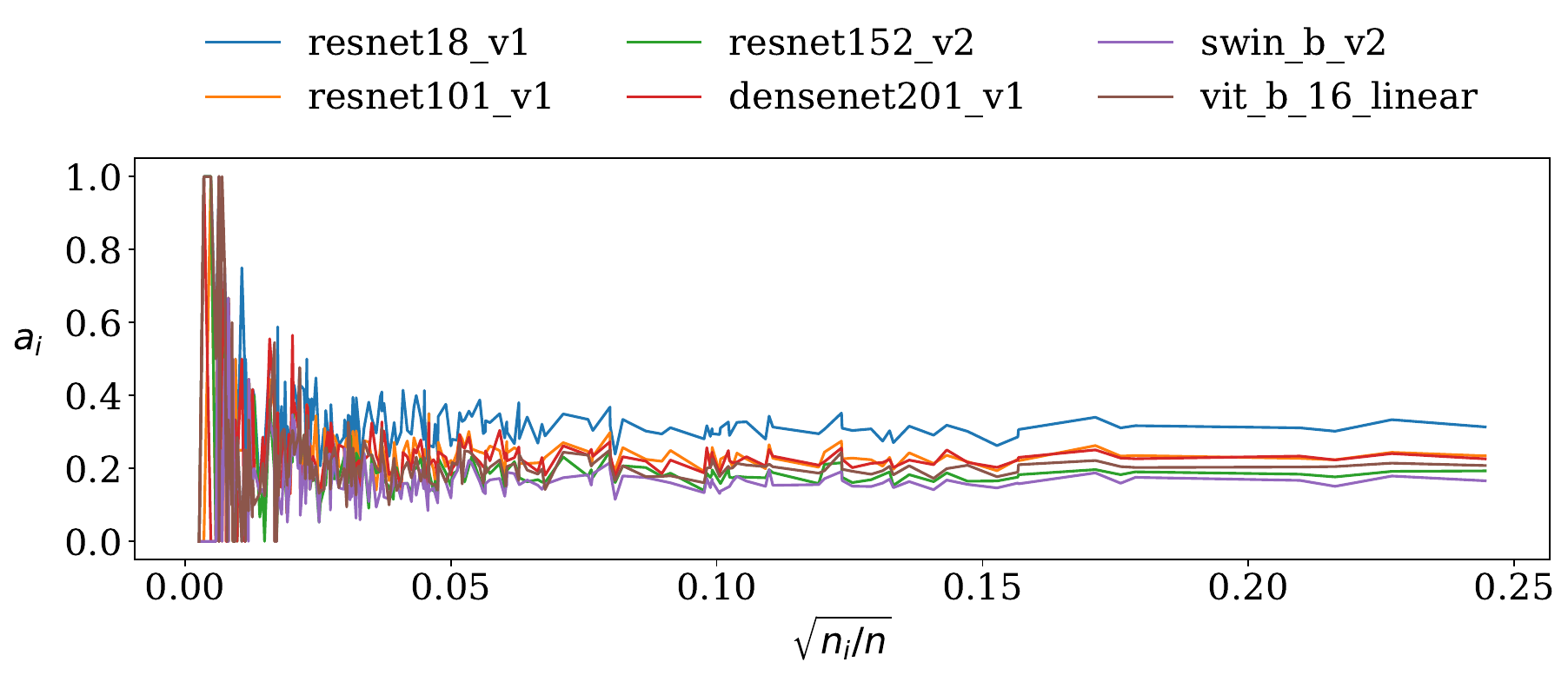}
    \caption{The alignment between local error with distribution's complexity. Note that quantities $\{\sqrt{n_i/n}: {i \in T}\}$ partly reflect the complexity of the data distribution. A high $\sqrt{n_i/n}$ means a high probability density in area $\gZ_i$.}
    \label{fig:local-error-to-distribution-complexity}
\end{figure}
\begin{figure}[tp]
    \centering
    \includegraphics[width=.75\textwidth]{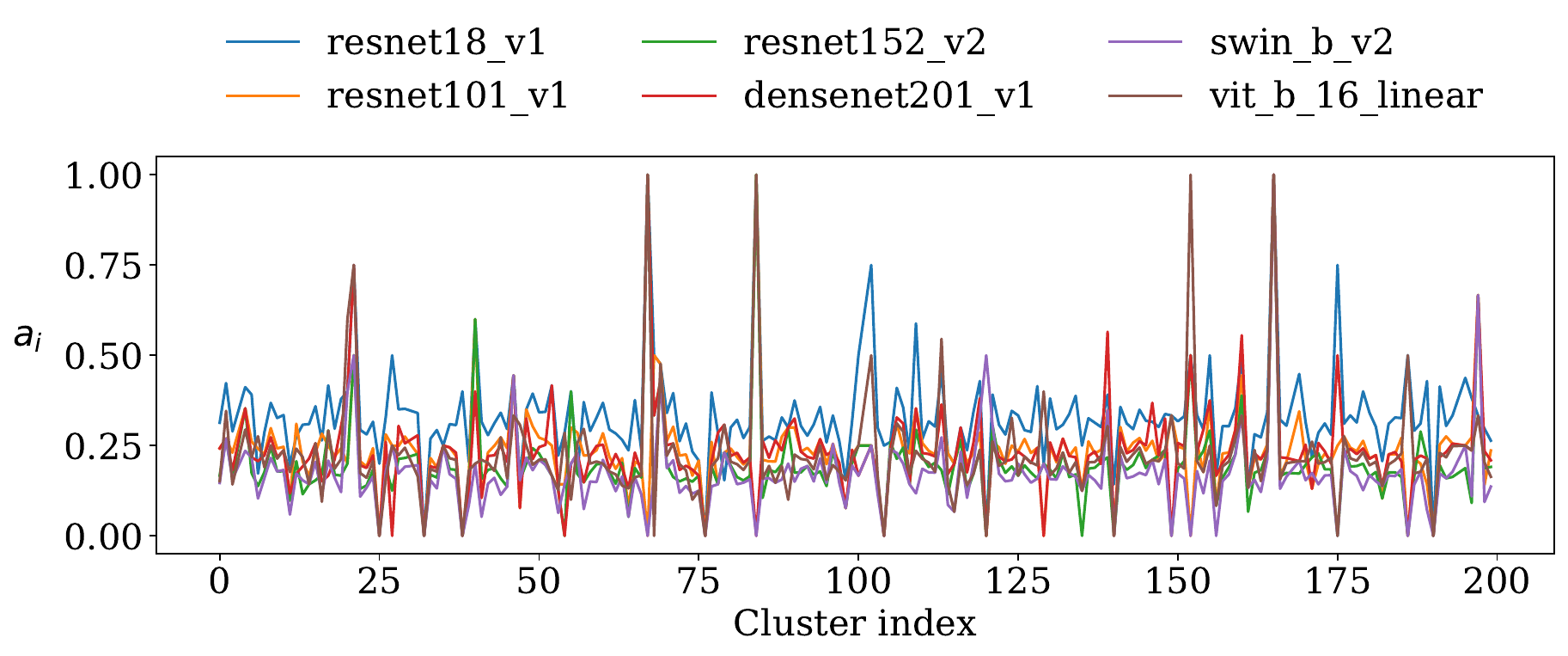}
    \includegraphics[width=.75\textwidth]{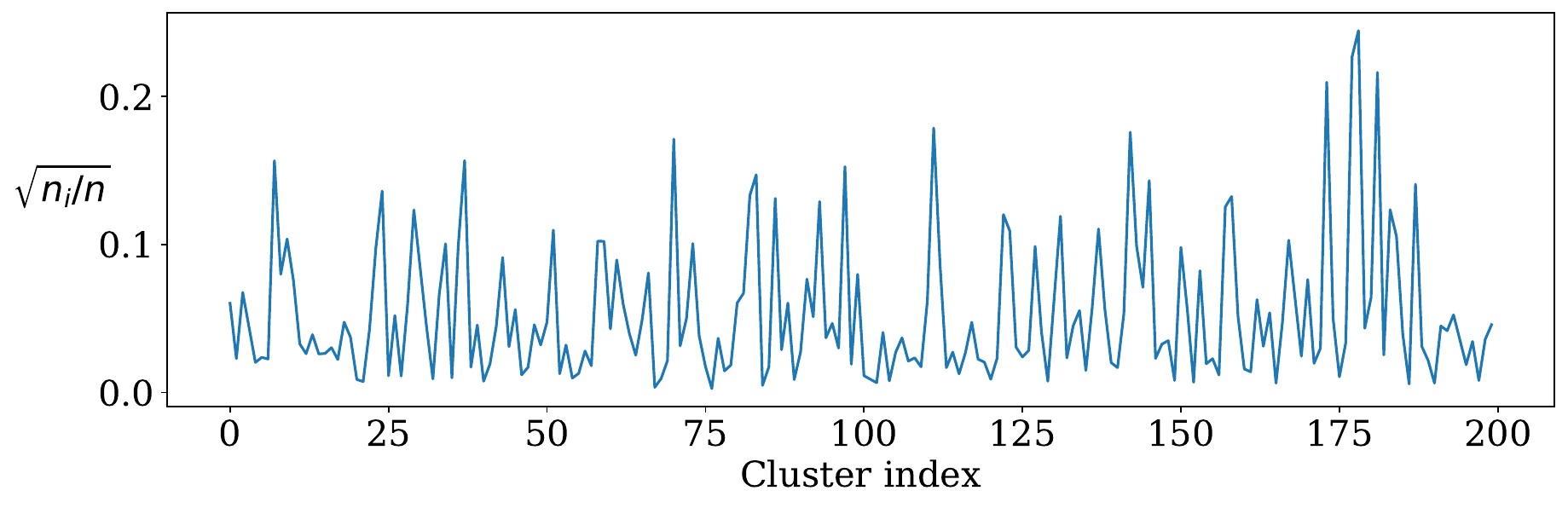}
    \caption{Distribution of local errors and samples on all local areas of the data space. Each cluster index represents an area.}
    \label{fig:local-error-and-distribution-complexity}
\end{figure}

\begin{remark}
The key insight from the result of this evaluation is that \textit{the nature of the strong correlation in Table~\ref{tab:understanding-generalization-Align-Fair} reveals which geometric aspects of the data distribution and localized model behavior drive the generalization gap}. In particular:
\begin{itemize}
    \item The decomposition in Bound (\ref{thm-gen-train-small-K-eq}) distinguishes \textit{distributional concentration} from \textit{local model stability}, two factors that classical norm-based or PAC-Bayes bounds cannot separate.
    \item The strong correlation between \textit{the data-model behavior alignment} (as measured by $Align$) and the test error suggests that such an alignment may be a critical factor in generalization for modern large models.
    \item The local-loss terms indicate where the model allocates most of its residual error mass, providing a structured way to diagnose local weaknesses (e.g., regions with inconsistent predictions).
\end{itemize}

These observations open concrete avenues for future algorithms, e.g., geometry-aware sampling, region-wise curriculum learning, adaptive partition refinement, or regularizers designed to smooth local variations. 
\end{remark}

\subsection{Computed bounds for smaller or simplier models}\label{app-simple-models}

In this section, we compute our bounds on several simple machine learning models for two tasks: classification and regression. Specifically, we train
\begin{itemize}
    \item \textit{three classifiers:} logistic regression, SVM, and XGBoost on a topic classification task using a news popularity dataset with 100K samples available at \url{https://archive.ics.uci.edu/dataset/432/news+popularity+in+multiple+social+media+platforms}. 
    \item \textit{three regression models:} linear regression, support vector regression (SVR), and XGBoost, using the VirusShare dataset  with 107K samples available at \url{https://archive.ics.uci.edu/dataset/413/dynamic+features+of+virusshare+executables}.
\end{itemize}

For all experiments, we split each dataset into training set and testing set with ratio 8:2 and compute our bound using the setting $K = 200$, $\delta_2 = 0.01$, $\delta_1 = 0.04$, $\alpha = 100$, as in other experiments. For the classification tasks, we employ the 0-1 loss to measure both training and test error, building the partition by clustering on the TFIDF vector space with random centroids initialized. For the regression tasks, we use the L1 loss for computing the training and test loss, building the partition by clustering on the standard-scaled vector space with random centroids initialized.

The evaluation results are reported in Table \ref{tab:simple-models}. We observe that our bound is non-vacuous for all the classifiers. For this experiments, our bound seem to be high and far from the test error. The main reason may come from the small size of the datasets (about 100K samples vs. 1.2M ImageNet images). Some other reasons may be the large value of $K=200$ and the misalignment between the partition and the data distribution.

\begin{table}[tp]
\centering
\caption{Results on Classification and Regression tasks.}
(a) Classification results.

\begin{tabular}{|l|c|c|c|}
\hline
\textbf{Model}      & \textbf{Test error} & \textbf{Train error} & \textbf{Our bound} \\ \hline
Logistic Regression & 0.02467             & 0.01905              & 0.81254        \\
SVM                 & 0.02284             & 0.01271              & 0.78078        \\
XGBoost                 & 0.02638             & 0.01750              & 0.78557        \\ \hline
\end{tabular}

(b) Regression results.

\begin{tabular}{|l|c|c|c|}
\hline
\textbf{Model}      & \textbf{Test error} & \textbf{Train error} & \textbf{Our bound} \\ \hline
Linear Regression &  0.14350            & 0.14187              & 1.06500        \\
SVR                 & 0.35403          & 0.35405
            & 1.27718       \\
XGBoost                & 0.10975             & 0.11070             & 1.03383        \\ \hline
\end{tabular}
\label{tab:simple-models}
\end{table}

\section{Computational complexity for computing certificates}\label{app-Computational-complexity}

We provide some analyses about the cost to compute our bound (\ref{thm-gen-train-small-K-any-distribution-eq}). This can help readers to see how cheap it is when compare with the existing bounds.

\textbf{Computational complexity.} Given a fixed partition $\Gamma=\{\gZ_i\}_{i=1}^K$, computing the bound in Theorem~\ref{thm-gen-train-small-K-any-distribution} requires a single pass over the dataset $\mS$. For each sample $\vz \in \mS$, we determine the region index $i$ such that $\vz \in \gZ_i$, increment the corresponding count $n_i$, and accumulate the empirical loss $F(\mS,\vh)$. This step takes $O(n \cdot T_\Gamma)$ time, where $T_\Gamma$ denotes the cost of identifying the region index for a given $\vz$ (e.g., constant time for simple partitions, or higher depending on the structure of $\Gamma$). Once the counts $\{n_i\}_{i=1}^K$ are obtained, all remaining quantities ($\hat{u}$, $g_2(\delta)$, and the bound) can be computed in $O(K)$ time. Therefore, the overall complexity is $O(n \cdot T_\Gamma + K)$ time and $O(K)$ memory.

\textbf{Construction cost of $\Gamma$.} The cost of constructing the partition $\Gamma$ is separate from evaluating the bound and depends on how $\Gamma$ is defined. For simple predefined partitions (e.g., uniform grids, randomly generated ones), construction requires $O(K)$ time. If $\Gamma$ is built from auxiliary data of size $m$ (e.g., via $k$-means clustering), the cost is typically $O(m K d \cdot I)$, where $d$ is the feature dimension and $I$ is the number of iterations. For representation-aware partitions using a pretrained model, one must additionally account for feature extraction, which costs $O(m \cdot T_{\mathrm{feat}})$, where $T_{\mathrm{feat}}$ is the cost of a forward pass. In many applications, $\Gamma$ can be constructed offline and reused across models, so this cost can be amortized; the per-model certification cost then remains $O(n \cdot T_\Gamma + K)$.

\textbf{Experiment.} We report some statistics about running time of different steps when computing our bound on ImageNet with 1.2M training samples, and 50K validation samples. K-means was used to define partition $\Gamma$, using the FAISS library. It was used for both input space and feature space induced by a pretrained ResNet-18. The results appear in Table~\ref{tab-sup-compute-cost}. They suggest that computing the training loss of a big model only can be very expensive, which is significantly more costly than running K-means  and assigning training samples to appropriate local areas.

\begin{table*}[tp]
\centering
\caption{Computational cost to do different steps to compute bound (\ref{thm-gen-train-small-K-any-distribution-eq}). All were done using 1 GPU P100.}
\label{tab-sup-compute-cost} 
\begin{tabular}{|lr|}
\hline 
 \textbf{Step}       &      \textbf{Time}  \\ \hline
 K-means to find centroids on Validation set (resized to 32x32x3) &  < 1' \\
 Assign training samples to local regions on input space & 2h10'57'' \\
 Assign training samples to local regions on Feature space of ResNet-18 & 3h25'04'' \\
 Compute the uncertainty term & < 1' \\
 \textbf{Compute training loss:} & \\
 	\hspace{15pt} RegNet Y 128GF e2e & ~58h \\
  	\hspace{15pt} VIT H 14 linear & ~20h \\ \hline
\end{tabular}
\end{table*}

\section{Related work}\label{sec-related-work}

Various approaches have been studied to analyze generalization capability. Those approaches connect different aspects of a learning algorithm or hypothesis to generalization.

\textbf{Norm-based bounds}  \citep{bartlett2017SpectralMarginDNN,golowich2020RC,galanti2023normDNN,graf2022measuring} is one of the earliest approaches to understand NNs. The existing studies often use Rademacher complexity to provide data- and model-dependent bounds on the generalization error. An NN with smaller weight norms will have a smaller bound, suggesting better generalization on unseen data. Nonetheless, the norms of weight matrices are often large for practical NNs \citep{arora2018strongerBounds}. Therefore, most existing norm-based bounds are vacuous.

\textbf{Algorithmic stability} \citep{bousquet2002stability,shalev2010StabilityLearnability,charles2018stability,kuzborskij2018stabilitySGD} is an approach to studying a learning algorithm. Basically, those theories suggest that a more stable algorithm can generalize better. Stable algorithms are less likely to overfit the training set, leading to more reliable predictions.  The stability requirement in those theories is that a replacement of one sample for the training set will not significantly change the  loss of the trained model. Such an assumption is really strong. One drawback is that achieving stability often requires restricting model complexity, potentially sacrificing predictive accuracy on challenging datasets. Therefore, this approach has a limited success in understanding deep NNs.

\textbf{Algorithmic robustness} \citep{xu2012robustnessGeneralize,sokolic2017robustDNN,kawaguchi22RobustGen,than2025gentle} is a framework to study generalization capability. It says that a robust learning algorithm can produce robust models which can generalize well on unseen data. This approach provides another lens to understand a learning algorithm and a trained model. However, it requires the assumption that the learning algorithm is robust, i.e., the loss of the trained model changes little in the small areas around the training samples. Such an assumption is really strong and cannot apply well for modern NNs, since many practical NNs suffer from adversarial attacks \citep{madry2018AdversarialTraining,zhou2022adversarial}. \citet{than2025gentle}  showed that those theories are often vacuous.

\textbf{Neural Tangent Kernel}   \citep{jacot2018NTK,arora2019fine}  provides a theoretical lens to study generalization of NNs by linking them to kernel methods in the infinite-width limit. As networks grow wider, their training dynamics under gradient descent can be approximated by a kernel function which remains constant throughout training. This perspective simplifies the analysis of complex neural architectures. The framework enables explicit generalization bounds, and a deeper understanding of how NN architecture and initialization affect learning. However, the main limitation of this framework  comes from its assumptions, such as the \textit{infinite-width} regime and fixed kernel during training, may not fully capture the behavior of finite, practical NNs. Some other studies \citep{lee2022NTK} can remove the infinite-width regime but assume the \textit{infinite depth}.

\textbf{Mutual information (MI)} \citep{xu2017information,feldman2019stability,nadjahi24slicingMI}
has emerged as a powerful tool for analyzing generalization by quantifying the dependency between a model's learned representations and the data. Since a trained model contains the (compressed) knowledge learned from the training samples, MI offers a principled framework for studying the trade-off between compression and predictive accuracy. However, the existing MI-based theories \citep{xu2017information,wang2021optimizingMI,sefidgaran2022rateMI,nadjahi24slicingMI} have a notable drawback: computing MI in high-dimensional, non-linear settings is computationally challenging. This drawback poses significant challenges for analyzing deep NNs, although \citep{nadjahi24slicingMI,dong2025exactly} obtained some promissing results on small NNs. 

\textbf{PAC-Bayes}   \citep{mcallester1999PACBayes,haddouche2023pac,biggs2023tighterPAC,awasthi2020pac,perez2021tighter} recently has received a great attention, and provide  non-vacuous bounds \citep{zhou2019CompressionBound,mustafa2024StochasticNN} for some NNs. Those bounds  often estimate $\E_{\hat{\vh}}  [F(P, \hat{\vh})]$ which is the expectation of the test error over the posterior distribution of $\hat{\vh}$. It means that those bounds are for a \textit{stochastic model} $\hat{\vh}$. Hence they provide limited understanding for a specific deterministic model $\vh$. \citet{neyshabur2018SpectralMarginDNN} provided an attempt to derandomization for PAC-Bayes but resulted in vacuous bounds for modern NNs \citep{arora2018strongerBounds}. Some recent attempts to derandomization include \citep{viallard2024PacBayes,clerico2025deterministicPAC}.

\textbf{Non-vacuous bounds for NNs:} \citet{dziugaite2017computingBounds} obtained a non-vacuous bound for NNs by finding a posterior distribution over neural network parameters that minimizes the PAC-Bayes bound. Their optimized bound is non-vacuous for a stochastic MLP with 3 layers trained on MNIST dataset. \citet{zhou2019CompressionBound} bounded the population loss of a stochastic NNs by using compressibility level of a NN. Using off-the-shelf neural network compression schemes, they provided the first non-vacuous bound for LeNet-5 and MobileNet, trained on ImageNet with more than 1.2M samples. \citet{lotfi2022pac} developed a compression method to further optimize the PAC-Bayes bound, and estimated the error rate of 40.9\% for MobileViT on ImageNet. \citet{mustafa2024StochasticNN}  provided a non-vacuous PAC-Bayes bound for adversarial population loss for VGG  on CIFAR10 dataset. \citet{galanti2023comparative} presented a PAC-Bayes bound which is non-vacuous for Convolutional NNs with up to 20 layers and for CIFAR10 and MNIST. \citet{akinwande24understanding} provided a non-vacuous PAC-Bayes bound for prompts. Although making a significant progress for NNs, those bounds are  non-vacuous for stochastic   neural networks only. \citet{biggs2022nonvacuousBound}  provided PAC-Bayes bounds for deterministic models and obtain (empirically) non-vacuous bounds for a specific class of (SHEL) NNs with a single hidden layer, trained on MNIST and Fashion-MNIST. Nonetheless, it is unclear about how well those bounds apply to bigger or deeper NNs. 

Towards understanding big/huge NNs, \citet{lotfi24NonVacuousLLM,lotfi24unlockingLLM} made a significant step that provides non-vacuous bounds for LLMs. While the PAC-Bayes bound in \citep{lotfi24NonVacuousLLM} can work  with LLMs trained from  i.i.d data, the recent bound in \citep{lotfi24unlockingLLM} considers token-level loss for LLMs and applies to dependent settings, which is close to the practice of training LLMs. Using both model quantization, finetuning and some other techniques, the PAC-Bayes bound by \citep{lotfi24unlockingLLM} is shown to be non-vacuous for huge LLMs, e.g., LLamMA2. Those bounds significantly push the frontier of learning theory towards building a solid foundation for DL. 

Nonetheless, there are two main drawbacks of those bounds \citep{lotfi24NonVacuousLLM,lotfi24unlockingLLM}. First, model quantization or compression is required in order to obtain a good bound. It means, those bounds are for the quantized or compressed models, and \textit{hence may not necessarily be true for the original (unquantized or uncompressed) models}. For example, \citep{lotfi24unlockingLLM} provided a non-vacuous bound for the 2-bit quantized versions of LLamMA2, instead of their original pretrained versions. Second, those bounds require the assumption that \textit{the model (hypothesis) family is finite}, meaning that a learning algorithm only searches in a space with finite number of specific models. Although such an assumption is reasonable for the current computer architectures, those bounds cannot explain a trained model that belongs to families with  infinite (or uncountable) number of members, which are provably prevalent. In contrast, our bounds apply directly to any specific model without requiring any modification or support. A comparison between our bounds and prior approaches about some key aspects is presented in Table~\ref{tab-Recent-approaches}.

\end{document}